
\documentclass{article}

\usepackage{microtype}
\usepackage{graphicx}
\usepackage{subcaption}
\usepackage{booktabs} 

\usepackage{hyperref}



\usepackage[accepted]{icml2025}

\usepackage{amsmath}
\usepackage{amssymb}
\usepackage{mathtools}
\usepackage{amsthm}

\usepackage[capitalize,noabbrev]{cleveref}


\usepackage{amsmath,amsfonts,bm}









\def\eqref#1{equation~\ref{#1}}









\def\1{\bm{1}}








\def\vg{{\bm{g}}}

\def\vj{{\bm{j}}}
\def\vk{{\bm{k}}}

\def\vr{{\bm{r}}}



\def\mF{{\bm{F}}}

\def\mK{{\bm{K}}}

\def\mR{{\bm{R}}}

\def\mW{{\bm{W}}}

\DeclareMathAlphabet{\mathsfit}{\encodingdefault}{\sfdefault}{m}{sl}
\SetMathAlphabet{\mathsfit}{bold}{\encodingdefault}{\sfdefault}{bx}{n}

\def\gA{{\mathcal{A}}}

\def\gF{{\mathcal{F}}}
\def\gG{{\mathcal{G}}}

\def\gK{{\mathcal{K}}}

\def\gN{{\mathcal{N}}}

\def\gP{{\mathcal{P}}}
\def\gQ{{\mathcal{Q}}}

\def\gT{{\mathcal{T}}}
\def\gU{{\mathcal{U}}}



\def\sN{{\mathbb{N}}}

\def\sR{{\mathbb{R}}}










\theoremstyle{plain}
\newtheorem{theorem}{Theorem}[section]

\newtheorem{lemma}[theorem]{Lemma}

\theoremstyle{definition}
\newtheorem{definition}[theorem]{Definition}
\newtheorem{assumption}[theorem]{Assumption}
\theoremstyle{remark}

\crefname{assumption}{Assumption}{Assumptions}

\usepackage[textsize=tiny]{todonotes}

\icmltitlerunning{Maximal Update Parametrization and Zero-Shot Hyperparameter Transfer for Fourier Neural Operators}

\begin{document}

\twocolumn[
\icmltitle{Maximal Update Parametrization and Zero-Shot Hyperparameter Transfer for~Fourier Neural Operators}



\icmlsetsymbol{equal}{*}

\begin{icmlauthorlist}
\icmlauthor{Shanda Li}{cmu}
\icmlauthor{Shinjae Yoo}{bnl}
\icmlauthor{Yiming Yang}{cmu}
\end{icmlauthorlist}

\icmlaffiliation{cmu}{School of Computer Science, Carnegie Mellon University, Pittsburgh, PA, USA}
\icmlaffiliation{bnl}{Brookhaven National Laboratory, Upton, NY, USA}

\icmlcorrespondingauthor{Shanda Li}{\texttt{shandal@cs.cmu.edu}}

\icmlkeywords{PDE, Neural Operators}

\vskip 0.3in
]



\printAffiliationsAndNotice{}  

\begin{abstract}
Fourier Neural Operators (FNOs) offer a principled approach for solving complex partial differential equations (PDEs). However, scaling them to handle more complex PDEs requires increasing the number of Fourier modes, which significantly expands the number of model parameters and makes hyperparameter tuning computationally impractical. To address this, we introduce \textbf{$\mu$Transfer-FNO}, a zero-shot hyperparameter transfer technique that enables optimal configurations, tuned on smaller FNOs, to be directly applied to billion-parameter FNOs \textit{without} additional tuning. Building on the Maximal Update Parametrization ($\mu$P) framework, we mathematically derive a parametrization scheme that facilitates the transfer of optimal hyperparameters across models with different numbers of Fourier modes in FNOs, which is validated through extensive experiments on various PDEs. Our empirical study shows that $\mu$Transfer-FNO reduces computational cost for tuning hyperparameters on large FNOs while maintaining or improving accuracy.

\end{abstract}
\section{Introduction}

With the explosive success of deep learning, Fourier Neural Operators (FNOs)~\citep{li2021fourier} have emerged as a promising paradigm for solving complex PDEs by learning mappings between function spaces. 
The kernel integral operator is the most crucial component in FNO. The operator first transforms the input function into the spectral domain and then updates the features for the low-frequency components while zeros out the high-frequency ones. The capacity of FNOs is directly tied to the number of Fourier modes $K$ used in the network. It is desirable to use a large $K$ for modeling complex PDE dynamics and capturing fine-grained spatio-temporal structures. 

However, it can be expensive to scale up the number of Fourier modes $K$ in FNO. For a $d$-dimensional PDE, the parameter count of the kernel integral operator scales as $\mathcal{O}(K^d)$, leading to substantial computational requirements. For instance, even for a 4-layer FNO with $d=3$, increasing $K$ from 3 to 24 expands the model from 1.7M to 906M parameters, with the Fourier integral operator accounting for over 98\% of the total parameters.\footnote{Detailed model configurations can be found in \cref{sec:exp-setup}.} As the parameter count of FNOs grows, hyperparameter tuning—already a computationally expensive process—becomes infeasible at such scales. Standard approaches involve tuning hyperparameters directly on large models, which demands extensive computational resources and is often prohibitive in practice. This bottleneck motivates our central research question:

\begin{center}
    \textit{Can we efficiently tune hyperparameters for FNOs\\with large numbers of Fourier modes?}
\end{center}

\begin{figure*}
    \centering
    \includegraphics[width=0.95\linewidth]{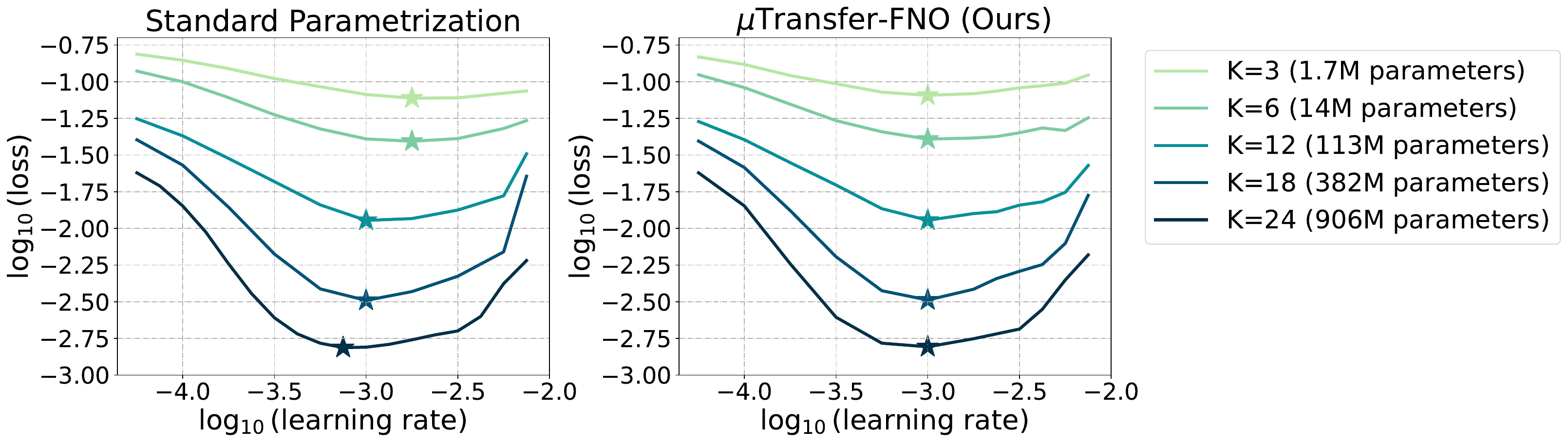}
    \vspace{-2mm}
    \caption{\textbf{Loss values of FNO-3D on the incompressible Navier-Stokes Equation}, with varying numbers of Fourier modes $K$ and different learning rates. The left and right panels correspond to models trained with standard techniques and $\mu$Transfer-FNO, respectively. The star on each curve marks the optimal learning rate which leads to the lowest loss value. Under standard parametrization, the optimal learning rates shifts as model size increases; while $\mu$Transfer ensures a stable optimal learning rate when the model size scales from 1.7M to near 1B.}
    \label{fig:teaser}
\end{figure*}

One straightforward idea is search for the optimal set of hyperparameters on small FNOs and apply it to large ones. However, naively transferring the hyperparameter across scales can lead to sub-optimal performance. On the left panel of \cref{fig:teaser}, we show that the optimal hyperparameter changes as model scales under the standard training recipe. In this work, we address this challenge by developing a zero-shot hyperparameter transfer technique tailored for FNOs, \textbf{$\mu$Transfer-FNO}. The key insight is to properly scale the kernel integral operator parameters as we increase the number of Fourier modes $K$, so that the optimal hyperparameter configuration remain relatively stable across model sizes (e.g., the right panel of \cref{fig:teaser}). This insight suggests that we can identify good hyperparameters using a small proxy model and transfer them to much larger FNOs without additional tuning.

Our approach builds on the Maximal Update Parametrization ($\mu$P) framework~\citep{yang2021tensor}. Informally, parametrization refers to the scaling of initialization variances and learning rates of model parameters when the model size increases. The Maximal Update Parametrization ($\mu$P) is a unique parametrization with many desirable theoretical and empirical properties which can be rigorously defined and mathematically derived. In particular, prior works show that $\mu$P enables hyperparameter transfer under model width/depth scaling for MLP and Transformers~\citep{yang2022tensor, yang2024tensor}. However, whether $\mu$P for neural operators exists and whether it can empirically enable hyperparameter transfer remain unknown. We derive the first $\mu$P formulation for FNO under the scaling of the Fourier mode size $K$. This extension is technically non-trivial due to the unique nature of FNOs: (1) Unlike MLPs and Transformers that process finite-dimensional vectors/sequences, FNOs operate on continuous functions and is agnostic to the discretization; (2) The working mechanism of the kernel integral operator is significantly different from and more complicated than the common deep learning modules such as linear layers or embedding weights analyzed in prior works. Our analysis leads to the following result:

\begin{theorem}[Informal version of \cref{thm:main}]
    The $\mu$P for FNO scales the initialization variances of the kernel integral parameters by $\Theta\left(\frac{1}{d\log K}\right)$ and their learning rates by $\Theta\left(\frac{1}{\sqrt{d\log K}}\right)$, where $d$ and $K$ denotes the PDE dimensionality and number of Fourier modes, respectively.
    \label{thm:informal}
\end{theorem}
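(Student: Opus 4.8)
The plan is to follow the standard recipe for deriving a maximal update parametrization --- posit an ansatz for how the initialization variances and learning rates of each parameter group scale, impose the $\mu$P desiderata (every hidden (pre)activation and the network output is $\Theta(1)$ at initialization, the change in every hidden (pre)activation after one optimizer step is $\Theta(1)$, and nothing diverges as $K\to\infty$), and solve for the exponents --- but carried out for the truncated spectral convolution rather than a dense matrix multiply. Concretely, I would first fix the FNO architecture (lifting map $P$, $L$ Fourier layers, each consisting of a pointwise linear map $W^{(\ell)}$ and a truncated spectral convolution with per-mode weights $\{R^{(\ell)}_k\}_{\|k\|_\infty\le K}$, a pointwise nonlinearity, and a projection map $Q$), identify the parameter groups, and introduce the ansatz in which the kernel integral weights are initialized with variance $\propto g(d,K)$ and trained with learning rate $\propto h(d,K)$ for unknown functions $g,h$, while $P$, $W^{(\ell)}$, $Q$ are scaled as in the known width-$\mu$P.

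The quantity the whole analysis turns on is the \emph{effective energy} $S_K := \sum_{\|k\|_\infty\le K}\|\hat v_k\|^2$ of the functions $v$ that enter the spectral convolutions. The central structural step is a regularity lemma: under an appropriate assumption on the Fourier decay of the input function (I expect the relevant case to be the borderline/critical Sobolev rate in $d$ dimensions, which is what makes the parametrization nontrivial), and using that a pointwise nonlinearity followed by truncation to $K$ modes maps this function class to itself, one shows that every function feeding a spectral convolution has $S_K=\Theta(d\log K)$ --- the $d$ coming from the per-coordinate truncation $|k_j|\le K$ and the $\log K$ from the integral comparison $\sum_{\|k\|\le K}\|k\|^{-d}\asymp \log K$. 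Establishing this \emph{self-consistently} across the depth of the network, through nonlinearities and through the interaction with the random $R^{(\ell)}_k$, is the step I expect to be the main obstacle.

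Given the regularity lemma, the forward pass at initialization is routine: conditionally on its input, the spectral-conv output at a point is a sum of $\Theta(K^d)$ independent mean-zero contributions with total variance $\Theta\!\left(g(d,K)\, S_K\right)$, so demanding $\Theta(1)$ activations forces $g(d,K)=\Theta\!\left(1/(d\log K)\right)$; the pointwise linear term and the lifting/projection layers are absorbed into the existing width-$\mu$P bookkeeping. For the learning rate I would write out one SGD step: the gradient of the loss with respect to $R^{(\ell)}_k$ carries a factor $\hat v_k$ (the Fourier coefficient of that layer's input) times a backpropagated signal, so the induced change in the next layer's activation is a mode-weighted sum whose scale is governed by $h(d,K)\sqrt{S_K}$; requiring that this layer is feature-learning (the update changes activations by $\Theta(1)$) together with the no-blow-up constraint pins $h(d,K)=\Theta\!\left(1/\sqrt{d\log K}\right)$, consistent with $h^2\asymp g$. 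The delicate points here are tracking the complex-valued, per-mode block-diagonal correlation structure of the $R^{(\ell)}_k$ and of the backpropagated vectors through the Fourier transform --- the spectral convolution is not a dense matmul, so the off-the-shelf Tensor Programs master theorem does not directly apply and a bespoke ``infinite-mode limit'' argument is needed --- and ensuring the limit is taken over the number of retained modes $K$ with the underlying function and domain held fixed, so that the resulting parametrization is genuinely discretization-agnostic.

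Finally I would assemble the per-group constraints and argue that $\mu$P is the unique corner of the feasible set in which every layer, including every spectral convolution, is maximal (feature-learning and non-divergent): the other corners either suppress feature learning in the kernel integral operator or blow up activations. This yields initialization variance $\Theta\!\left(1/(d\log K)\right)$ and learning rate $\Theta\!\left(1/\sqrt{d\log K}\right)$ for the kernel integral parameters, which is the claimed statement.
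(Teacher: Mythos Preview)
Your proposal has a genuine gap: the mechanism by which $\sqrt{d\log K}$ arises is not the one you posit. You argue via a \emph{sum} --- the spectral-conv output is a sum of $\Theta(K^d)$ independent contributions with total variance $g(d,K)\,S_K$, and you then need a ``regularity lemma'' asserting $S_K=\sum_{\|k\|_\infty\le K}\|\hat v_k\|^2=\Theta(d\log K)$ to force the scaling. But this lemma conflicts with the very stability condition you are trying to enforce: if $h_{\ell}=\Theta(1)$ in the relevant norm, then by Parseval $S_K\le\|h_\ell\|^2=\Theta(1)$, not $\Theta(d\log K)$; your critical-Sobolev ansatz $|\hat v_k|^2\sim|k|^{-d}$ is exactly the borderline where the function falls out of $L^2$. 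Propagating such growth self-consistently through pointwise nonlinearities (which you flag as the main obstacle) is not just delicate --- it is incompatible with ``every hidden activation is $\Theta(1)$.''

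The paper's argument avoids input regularity entirely by working with the \emph{spectral norm of the operator}. Since $\gK_\ell$ is diagonal in the Fourier basis, its matrix form is $\mF^{\mathsf H}\widetilde{\mR}_\ell\mF$ with $\mF$ unitary and $\widetilde{\mR}_\ell$ diagonal, so $\|\mK_\ell\|_2=\max_{\vk\in[K]^d}|r_\ell^{\vk}|$. The $K^d$ entries $r_\ell^{\vk}$ are i.i.d.\ $\gN(0,b(K)^2)$, and the \emph{maximum} of $K^d$ Gaussians concentrates at $\Theta(b(K)\sqrt{\log K^d})=\Theta(b(K)\sqrt{d\log K})$; demanding $\|\mK_\ell\|_2=\Theta(1)$ gives $b(K)=\Theta(1/\sqrt{d\log K})$. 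The same extreme-value calculation, applied to the per-entry update tensor (assumed sub-Gaussian --- this is where Adam's coordinate-wise normalization enters; your SGD step would behave differently), yields $c(K)=\Theta(1/\sqrt{d\log K})$. So the $\sqrt{d\log K}$ is an extreme-value statistic, not a CLT/energy statistic, and the proof needs no assumption on the input's Fourier decay and no bespoke infinite-mode Tensor Programs limit.
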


As can be seen in the above, the scaling rate is drastically different from all the existing results (e.g., $\Theta(m^{-1})$ for width scaling and $\Theta(L^{-1/2})$ for depth scaling), highlighting the uniqueness of the kernel integral operator and enriching the $\mu$P framework. Technically, this is because existing $\mu$P derivations generally boil down to analyzing the behavior of averages of random variables, while in our deviation, the maximum of $K^d$ (sub-)Gaussian random variables emerges, which naturally leads to a $\sqrt{d\log K}$ term \citep{vershynin2018high}.

This theoretical result leads to $\mu$Transfer-FNO (\cref{alg:main}), an efficient hyperparameter tuning strategy for large-scale FNOs. We conduct extensive experiments to validate our theoretical results and demonstrate the effectiveness of $\mu$Transfer. We study three important PDE problems: Burgers' Equation, Darcy Flow, and Navier-Stokes Equation. We empirically show that $\mu$Transfer preserves the optimal learning rate, training batch size, and optimizer configuration across model scales, and can be applied to both the vanilla FNO training and the advanced Physics Informed Neural Operators \citep{li2024physics}. Specifically, on Navier-Stokes Equations, $\mu$Transfer-FNO successfully scales to FNOs with nearly 1B parameters and achieves lower test error than the baseline using only $0.30\times$ training compute.

In summary, our contributions in this work are threefold:
\begin{itemize}
    \item We are the first to derive the Maximal Update Parametrization ($\mu$P) \textit{theory} for FNOs by employing new techniques for analyzing FNOs.

    \item Based on our derived scaling rates in $\mu$P, we introduce the $\mu$Transfer-FNO \textit{algorithm}, for zero-shot hyper-parameter transfer in FNO.

    \item We \textit{experimentally} validate our theoretical claims on various PDEs, different training algorithms, and different hyper-parameters, showing the robustness of the theory. We also demonstrate that $\mu$Transfer-FNO can significantly reduce computational costs while maintaining or improving accuracy in practice.
\end{itemize}

\section{Preliminaries}

\label{sec:pre}
In this section, we introduce the fundamental concepts of parametric Partial Differential Equations (PDEs), Fourier Neural Operators (FNOs), and $\mu$Transfer.

\subsection{Parametric Partial Differential Equations}
Many problems in science and engineering can be modeled as Partial Differential Equations (PDEs), which describe how physical quantities evolve over time and space. Let $\Omega \subset \mathbb{R}^n$ be a compact domain, $\gA$ and $\gU$ be the parameter space and solution space which are typically infinite-dimensional function spaces. A general formulation of a parametric PDE is given by:
\begin{equation}
\label{eq:general-pde}
    \begin{cases}
    \mathcal{L}_a u(x) = 0, & \quad x \in \Omega \subset \mathbb{R}^d, \\
    \mathcal{B}_a u(x) = 0, & \quad x \in \partial\Omega,
    \end{cases}
\end{equation}
where $\mathcal{L}_a$ is a differential operator, $\mathcal{B}_a$ is a boundary condition operator \cite{kovachki2023neural, saad2023guiding}. Solving the parametric PDE involves finding a \textbf{solution operator} $\gG:\gA\to\gU$ whose output $u\in\gU$ satisfies both the governing equation and boundary conditions for the given parameter function $a$.

\citet{lu2021learning, kovachki2023neural} introduce the idea of learning the solution operator $\gG$ with a neural network $\gG_{\theta}$. Given training data $\{(a^{(i)}, u^{(i)}\}_{i=1}^M$, operator learning aims to approximate the solution operator via standard supervised learning.

\subsection{Fourier Neural Operators}

Fourier Neural Operator (FNO) is the most widely used deep learning model for operator learning \citep{li2021fourier}. Unlike most other neural networks which model mappings between finite-dimensional spaces, FNO operates in infinite-dimensional function spaces as defined below:
\begin{equation}\label{eq:fno-def}
    \gG_{\theta} = \gQ \circ \phi\left(\mW_L+\gK_L\right) \circ \cdots \circ \phi\left(\mW_1+\gK_1\right) \circ \gP.
\end{equation}

In \cref{eq:fno-def}, $\gP$ and $\gQ$ are pointwise linear layers that encode the lower dimension function into higher dimensions and decode the higher dimension function back to the lower dimensions, respectively. $\phi: \sR\to\sR$ is a point-wise activation function. $\mW_{\ell}\in\sR^{m\times m}$ is a point-wise linear mapping ($\ell\in[L]$) where $m$ denotes the hidden dimensionality. $\gK_{\ell}$ is the \textbf{kernel integral operator} using the Fourier transform, defined as
\begin{equation}\label{eq:kernel-int-def}
    (\gK v)(x) = \gF^{-1} \left[ \mR \cdot \gT_K (\gF v) \right](x), \quad \forall x \in \Omega,
\end{equation}
where $v:\Omega \to \sR^{m}$ is the input function to the operator, $\gF$ and $\gF^{-1}$ are the $d$-dimensional Fourier transform and its inverse, $\gT_K$ truncates the input to the lowest $K$ Fourier modes in each dimension, and $\mR\in\sR^{K^d\times m\times m}$ is the parameter tensor which multiplies with the Fourier feature vector for each of the $K^d$ frequency modes. Typically, $K$ is treated as a tunable hyperparameter of FNO. Using a larger $K$ can make the network more expressive.

In practice, $\Omega$ is typically a rectangular region in $\sR^d$ and discretized with $N_1\times\cdots\times N_d$ points, where $\min\limits_{1\leq j\leq d} N_j > K$. The input $v$ is represented as a tensor in $\sR^{N_1\times\cdots\times N_d\times m}$. We assume this in our theoretical analysis.

\subsection{Maximal Update Parametrization and $\mu$Transfer}

\citet{yang2021tensor} study the feature learning in overparametrized networks by considering a rich class of network parametrizations, namely the $abc$-parametrization, and the infinite-width limits. They show that there exists a unique parametrization which enables \textit{maximal} feature learning in a certain sense by scaling the initialization variances and learning rates for model parameters based on theoretical principles.
This parametrization scheme is referred as the \textbf{Maximal Update Parametrization ($\mu$P)}.

Later, \citet{yang2022tensor} propose $\mu$Transfer as a novel technique for zero-shot hyperparameter transfer across model scales, which is built upon the $\mu$P. The key insight is that the training dynamics and hence the parameter landscape across remain consistent as the model size scales under $\mu$P. This consistency allows for the optimal hyperparameters identified in a smaller model to be directly applied to a larger target model without additional tuning.

Our work studies $\mu$P and $\mu$Transfer for scaling up the number of Fourier modes $K$ in FNO since it contributes to the major computational cost of the model and is under-explored in existing relevant research. We formalize the terminologies and conduct rigorous derivation in \cref{sec:theory}.

\section{Theory}
\label{sec:theory}
In this section, we present our theoretical result that identifies the correct scaling rate for scaling up the kernel integral operator in FNO.

\paragraph{Notations.} Let $[L]=\{1, \cdots, L\}$. Define the pre-activated and activated hidden features as
\begin{align*}
    &h_0 = \gP v; \\
    &w_{\ell}= \left(\mW_{\ell}+\gK_{\ell}\right) h_{\ell-1},~h_{\ell} = \phi(w_{\ell}) \quad (\ell\in[L]),
\end{align*}
where $v$ is the input. Further denote by $w_{\ell, t}$ and $h_{\ell, t}$ the features after $t$ training iterations.

\subsection{Problem setup}

We are interested in identifying the Maximal Update Parametrization ($\mu$P) for FNO under the scaling of the number of Fourier modes $K$. The hidden dimensionality $m$ and the model depth $L$ are considered \textit{fixed} in our work, since the scaling of these two factor has been rigorously studied in prior works \citep{yang2021tensor, yang2024tensor}.

We extends and generalizes the abc-Parametrization to the parameter tensor $R$ in the kernel integral operator of the Fourier Neural Operator (FNO).

\begin{definition}[$abc$-parametrization for FNO]\label{def:abc-param}
The generalized $abc$-parametrization for FNO is specified by mappings $a,~b,~c: \sN^*\to \sR_{+}$ such that
\begin{enumerate}
    \item[(a)] The tensor $\mR$ in the kernel integral operator (\cref{eq:kernel-int-def}) is parametrized as $\mR = a(K) \vr$ for actual trainable parameter $\vr$
    where $K$ denotes the number of truncated Fourier modes;
    \item[(b)] Each entry of the trainable parameter $\vr$ is initialized from $\mathcal{N}(0, b(K)^2)$ independently;
    \item[(c)] The Adam learning rate for optimizing $\vr$ is scaled as $\eta = c(K)\eta_0$ where $\eta_0$ is the ``master'' learning rate independent of $K$.
\end{enumerate}
\end{definition}

We note that the original definition was restricted to $K^{-s}$ type of scaling, while we will see that the generalization here will indeed facilitate our analysis. 

\paragraph{The standard parametrization for FNO.} Following \citet{yang2022tensor}, we refer to the commonly-used parametrization in the open-source neural operator library\footnote{\url{https://github.com/neuraloperator/neuraloperator}} as the \textit{standard parametrization}. Under standard parametrization, $a(K)=1$, $b(K)=m^{-2}=\Theta(1)$, and $c(K)=1$. As shown in \cref{fig:teaser}, the standard parametrization does not preserve the optimal hyperparameter when using different numbers of Fourier modes, so one cannot reliably transfer the optimal training configuration of small models to large models. More in-depth empirical analysis is presented in \cref{sec:exp}.

Our goal is to find the Maximal Update Parametrization for the kernel integral operator from the parametrization class in \cref{def:abc-param}. Maximal Update Parametrization is defined as follows:

\begin{definition}[Maximal Update Parametrization for FNO]
\label{def:mup-fno}
    The Maximal Update Parametrization ($\mu$P) for FNO is an $abc$-parametrization that satisfies the following properties for any $\ell\in[L]$ with high probability:

    \begin{itemize}
        \item \textbf{Stability at initialization}: For a given input function $v$, the hidden features $h_{\ell,0}, w_{\ell, 0} = \Theta(1)$.
        \item \textbf{Feature learning in every layer}: For any $t\in\sN^*$, the feature update $\Delta_t w_l := w_{l,t} - w_{l,0} = \Theta(1)$.
    \end{itemize}
\end{definition}

We point out that the Maximal Update Parametrization is optimizer-dependent: The Maximal Update Parametrization of MLP is different when using SGD/Adam as the optimizer. In our work, we focus on the Adam optimizer \citep{kingma2015adam} because it is the \textit{de facto} choice for training FNO and demonstrates strong empirical performances.

As shown by \citet{yang2022tensor}, $\mu$P of is scale-invariant to the constant shift
\begin{equation*}
    (a, b, c)\leftarrow \left(\frac{a}{\psi}, b\psi, c\psi \right)
\end{equation*}
for any $\psi: \sN^*\to \sR_{+}$ with the Adam optimizer. Therefore, we assume $a(K)\equiv 1$ in our subsequence analysis without the loss of generality. This assumption simplifies the exposition since $\mR$ itself becomes the actual trainable parameter.

\subsection{Deriving $\mu$P for FNO}

In this subsection, we state our main theoretical result on the $\mu$P for FNO. We begin with formulating the necessary assumptions:

\begin{assumption}[Activation functions]\label{assu:activation}
    We assume the activation function $\phi$ is $\mathrm{tanh}$ or $\sigma$-$\mathrm{GELU}$.
\end{assumption}

\Cref{assu:activation} is standard in theoretical works in $\mu$P \citep{yang2021tensor, yang2022tensor, yang2024tensor, ishikawa2024on} and realistic in practice.

\begin{assumption}[Sub-Gaussian updates]\label{assu:subgaussian}
    In the $t$-th iteration, denote by $\vg_t(\mR_{\ell})\in\sR^{K^d\times m\times m}$ the update tensor returned by the optimizer before multiplying with the learning rate. We assume that the entries of $\vg_t(\mR_{\ell})$ are independent $C$-sub-Gaussian random variables conditioning on the model parameters in the $(t-1)$-th iteration for some constant $C$.
\end{assumption}

\Cref{assu:subgaussian} states that the tails of the entries in $\vg_t(\mR_{\ell})$ decays relatively fast. More discussions can be found in \cref{sec:remarks}

With the definitions and assumptions stated above, we are ready to state our main theoretical result:

\begin{theorem}[Main theoretical result]
\label{thm:main}
    Under \cref{assu:activation,assu:subgaussian}, the following $abc$-parametrization is a Maximal Update Parametrization ($\mu$P) of FNO with the Adam optimizer:
    \begin{equation}
    \label{eq:mup-abc}
        a(K)=1,~b(K)=c(K)=\Theta\left(\frac{1}{\sqrt{d\log K}}\right),
    \end{equation}
    where $d$ is the PDE dimensionality.
\end{theorem}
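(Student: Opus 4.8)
The plan is to verify directly the two defining properties of \cref{def:mup-fno}---stability at initialization and feature learning in every layer---and to read off $b(K)$ and $c(K)$ from the scales that make both hold. The key structural step is to expand the kernel integral operator in the Fourier basis: on the discretized grid, $(\gK_\ell v)(x) = \sum_{\|k\|_\infty < K} (\mR_{\ell,k}\, \hat v_k)\, e^{2\pi i\langle k,x\rangle}$, where $\hat v_k\in\sC^m$ are the (discrete) Fourier coefficients of $v$ and $\mR_{\ell,k}\in\sR^{m\times m}$ is the slice of $\mR_\ell$ at mode $k$, so that $\gK_\ell$ acts as $K^d$ essentially independent $m\times m$ linear maps conjugated by $\gF,\gF^{-1}$. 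Since $m$ and $L$ are fixed, the matrices $\mW_\ell$, $\gP$, $\gQ$ are kept in the standard parametrization and their contribution to the forward pass and to feature updates is $\Theta(1)$ exactly as in \citet{yang2022tensor}, so the whole analysis reduces to controlling the kernel term; using the scale-invariance of $\mu$P under $(a,b,c)\mapsto(a/\psi,b\psi,c\psi)$ we take $a(K)\equiv 1$, so $\mR_\ell$ is the trainable tensor, initialized entrywise as $\mathcal N(0,b(K)^2)$ and optimized by Adam with learning rate $c(K)\eta_0$. The genuinely new tool is a two-sided maximal inequality for random trigonometric polynomials: if $\{\hat g_k\}_{\|k\|_\infty<K}$ are conditionally independent, mean-zero, (sub-)Gaussian coefficient vectors, then $\big\|\sum_{\|k\|_\infty<K}\hat g_k\, e^{2\pi i\langle k,\cdot\rangle}\big\|_\infty \asymp \big(\sum_{\|k\|_\infty<K}\E\|\hat g_k\|^2\big)^{1/2}\sqrt{d\log K}$ with high probability---the maximum over the $\Theta(K^d)$ effective degrees of freedom is what produces the $\sqrt{d\log K}$ factor \citep{vershynin2018high}.

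For stability at initialization, write $w_{\ell,0} = \mW_\ell h_{\ell-1,0} + \gK_\ell h_{\ell-1,0}$; the first summand is $\Theta(1)$ by the standard parametrization of $\mW_\ell$, and a routine induction in $\ell$ shows each $h_{\ell,0}$ keeps a $\Theta(1)$ fraction of its $L^2$-energy in the lowest $K^d$ modes (given a mild such assumption on the input function and the smoothness of $\phi$). Conditioned on $h_{\ell-1,0}$, the second summand is a random trigonometric polynomial with independent Gaussian coefficients $\mR_{\ell,k}\hat h_{\ell-1,0,k}$, so the maximal inequality gives $\|\gK_\ell h_{\ell-1,0}\|_\infty \asymp b(K)\sqrt{d\log K}\,\big(\sum_{\|k\|_\infty<K}\|\hat h_{\ell-1,0,k}\|^2\big)^{1/2} = \Theta\big(b(K)\sqrt{d\log K}\big)$. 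Stability forces this to be $\Theta(1)$---$O(1)$ so the features do not blow up, $\Omega(1)$ so the kernel operator is not degenerate---which pins down $b(K)=\Theta(1/\sqrt{d\log K})$.

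For feature learning, consider the first Adam step (later steps follow by the standard $\mu$P induction in $t$). Because Adam rescales each coordinate of the raw gradient by its running second moment, and by \cref{assu:subgaussian} the raw gradient entries are conditionally sub-Gaussian, the update $\Delta_1\mR_{\ell,k}$ has entries of magnitude $\Theta(c(K))$ for every mode $k$. Differentiating through $\gF^{-1}$ makes $\partial\Ls/\partial(\mR_{\ell,k})_{ab}$ proportional to $\overline{(\hat h_{\ell-1,0,k})_b}$ times the $k$-th Fourier coefficient of the backpropagated error, so the induced coefficient $\Delta_1\mR_{\ell,k}\hat h_{\ell-1,0,k}$ is aligned with $\hat h_{\ell-1,0,k}$ (no cancellation in the channel sum) and has norm $\Theta\big(c(K)\,\|\hat h_{\ell-1,0,k}\|\big)$; summing over $k$ gives $\ell^2$-norm $\Theta(c(K))$. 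Applying the maximal inequality once more---the per-mode signs inherited from the error function are, generically, independent across $k$---yields $\|(\Delta_1\gK_\ell)h_{\ell-1,0}\|_\infty \asymp c(K)\sqrt{d\log K}$. Maximality of the update ($\Delta_1 w_\ell=\Theta(1)$ with $\mR_\ell$ not under-learning) forces $c(K)=\Theta(1/\sqrt{d\log K})$. Combining the two conditions yields \cref{eq:mup-abc}, and a final induction over $t$---reusing the $O(1)$ control on $\gK_\ell$ to show the non-kernel parts stay in their standard-parametrization regime---propagates both $\mu$P properties to all iterations.

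The main obstacle is the interface between the Fourier structure and the $\mu$P bookkeeping. Two points require genuine care. First, the sharp two-sided maximal inequality over the continuous domain: the upper bound follows from sub-Gaussian concentration plus a chaining/covering argument (or reduction to a grid of $O(K^d)$ points), but the matching lower bound is a Salem--Zygmund-type statement that must be checked for our conditionally sub-Gaussian, possibly sign-structured coefficients, and with attention to the conjugate symmetry forced by real-valued functions. Second, tracking across the $t$ training steps the alignment between the Adam updates $\Delta_t\mR_{\ell,k}$ and the Fourier coefficients $\hat h_{\ell-1,t,k}$ of the evolving hidden features: this alignment is what keeps the induced feature update from being $o(1)$, and establishing it rigorously is the familiar technical core of $\mu$P proofs, here requiring either the Tensor Programs apparatus or a hands-on induction adapted to the kernel operator. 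By comparison, checking that $\mW_\ell,\gP,\gQ$ remain $\Theta(1)$ under the $O(1)$ kernel perturbation is routine given that $m$ and $L$ are fixed.
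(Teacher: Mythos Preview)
Your approach is broadly correct and lands on the right scalings, but it takes a noticeably harder route than the paper's. The paper's key structural observation is that, on the discretized grid, the kernel integral operator factors as $\mK_\ell = \mF^{\mathsf H}\widetilde{\mR}_\ell\mF$ with $\mF$ unitary and $\widetilde{\mR}_\ell$ diagonal (for $m=1$); this is literally an eigendecomposition, so $\|\mK_\ell\|_2 = \max_{\vk\in[K]^d}|r_\ell^{\vk}|$ exactly. The whole stability analysis then collapses to bounding the maximum of $K^d$ independent (sub-)Gaussian scalars, which is $\Theta(b(K)\sqrt{d\log K})$ by a one-line high-dimensional-probability fact---no chaining, no Salem--Zygmund lower bound, and no assumption about how much of $h_{\ell-1}$'s $L^2$-energy sits in the lowest $K^d$ modes. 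Your $L^\infty$-over-$x$ analysis of the random trigonometric polynomial $(\gK_\ell h)(x)$ recovers the same $\sqrt{d\log K}$ factor but through heavier machinery and under extra hypotheses. For feature learning the gap is even larger: by linearity of $\gK$ in $\mR$, the update satisfies $\gK_{\ell,t+1}-\gK_{\ell,t}=\eta_0 c(K)\gK^{\vg_t(\mR_\ell)}$, and since \cref{assu:subgaussian} posits sub-Gaussianity \emph{directly on the Adam update tensor} $\vg_t$, the identical spectral-norm computation gives $\|\Delta_t\mK_\ell\|_2=\Theta(c(K)\sqrt{d\log K})$ without ever differentiating through $\gF^{-1}$, inspecting per-mode alignment, or invoking Tensor Programs. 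In short, the unitary-plus-diagonal shortcut lets the paper sidestep both obstacles you flag; your route would work, but it purchases only slightly more precision on the two-sided bounds, which the paper is content to handle at the level of ``in the general case.''
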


The detailed proof can be found in \cref{app:proofs}. The major technique in our proof is to derive the spectral norms of $\gK_{\ell}$ at initialization and its update. Our calculation shows that the spectral norms is the maximum absolute value of $\Theta(K^d)$ (sub-)Gaussian random variables. The scaling functions in \cref{eq:mup-abc} control the scales of these variables and hence lead to $\Theta(1)$ spectral norms, which in turn realizes the conditions in \cref{def:mup-fno}.

Our proof enriches the technique for $\mu$P deviations. In prior works \cite{yang2021tensor, yang2022tensor}, $\mu$P deviations mainly rely on the central limit theorem and the law of large numbers to characterize the \textit{average} of random variables. By contrast, in our study, the unique structure of FNO and the kernel integral operator leads to the analysis of the \textit{maximum} of random variables and consequently an interesting scaling function.

We also note that the scaling function does not depend on $N_1, \cdots, N_j$, i.e., the discretization of the function domain $\Omega$. This property is appealing in practice because it makes sure that $\mu$P does not change the resolution-agnostic nature of FNOs.

In the subsequent sections, we show that the theoretical result is practically useful by presenting the $\mu$Transfer-FNO algorithm and empirical study.

\subsection{Discussions on the result}
\label{sec:remarks}

In this section, we make a few remarks on the theoretical finding.

\paragraph{Regarding the connection to Transformers.} Recent research has shown that the Transformer attention module and its variants (e.g., Continuum Attention) are closely related to operator learning \citep{kovachki2023neural,rahman2024pretraining,calvello2024continuum}. While the Maximal Update Parametrization ($\mu$P) for FNO has been established by \citet{yang2022tensor}, we note that the result is not readily applicable to the Fourier Integral Operator. 

Specifically, for attention module and its variants (e.g., Continuum Attention), the parametrization closely resembles vanilla attention, and its size is controlled by the hidden dimensionality (i.e., the model ``width''). Hence existing results on scaling up model ``width'' are applicable.
In contrast, the Fourier Integral Operator is parametrized in a uniquely different way -- by modeling the Fourier Transform of a kernel function with $K$ Fourier modes. This design differs from all modules which have been explicitly studied in existing literature. Theorem \ref{thm:main} shows that its scaling rate is also different from that of other model components. That being said, in Fourier Integral Operators, the notion of model ``width'' still exists (corresponding to $m$ in \cref{sec:pre}), and existing width scaling results may apply to it.

\paragraph{Regarding \Cref{assu:subgaussian} and proactical applications of \cref{thm:main}.} \Cref{assu:subgaussian} requires sub-Gaussian updates to the parameter tensor $\mR$ in the kernel integral operator. While this is not standard in existing analysis, it is natural for the Adam optimizer where entry-wise normalized gradient momentum is used in the update and the normalization controls the scale of the update. Furthermore, the sub-Gaussian condition can be explicitly enforced in practice through \emph{element-wise gradient clipping} \citep{pascanu2013difficulty}, which bounds the updates and thereby ensures their sub-Gaussianity.

\subsection{The $\mu$Transfer-FNO algorithm}
\label{sec:alg}

As shown by \citet{yang2022tensor}, the hyperparameter landscape across neural networks of different sizes (e.g., FNOs with different numbers of Fourier modes) is reasonably stable if parametrized according to $\mu$P. This property enables us to probe the hyperparameter landscape with a small fixed-sized proxy model and identify the optimal configuration. The optimal configuration will also be (near) optimal for the large model under $\mu$P, thus circumventing the direct and computationally intensive hyperparameter tuning of large models. We present this procedure in \cref{alg:main}.

\begin{algorithm}
\caption{$\mu$Transfer-FNO}
\label{alg:main}
\begin{algorithmic}[1] 
\REQUIRE Hyper-parameter search space $\Xi$. \\
    FNO training algorithm \texttt{train}$(\xi, K)$ which returns the final loss given hyperparameter $\xi$ and number of Fourier modes $K$. \\
    Numbers of Fourier modes for target and small proxy models $K^*$ and $K_{\text{proxy}}$.
\ENSURE Efficiently train a target FNO using (near) optimal configurations.

\vspace{2mm}

\STATE $\vartriangleright$ \textsc{Parameter sweeping on small models.}
\STATE $\displaystyle \xi^* \leftarrow \operatorname{argmin}_{\xi\in \Xi} \texttt{train}(\xi, K_{\text{proxy}})$.

\vspace{4mm}

\STATE $\vartriangleright$ \textsc{Parameter rescaling.}
\vspace{1mm}
\STATE $\displaystyle  \xi^*_{\text{learning rate for } \mR} \leftarrow \sqrt{\frac{\log K_{\text{proxy}}}{\log K^*}}\xi^*_{\text{learning rate for } \mR} $
\vspace{1mm}
\STATE $\displaystyle  \xi^*_{\text{init. var. of } \mR} \leftarrow \frac{\log K_{\text{proxy}}}{\log K^*}\xi^*_{\text{init. var. of } \mR} $

\vspace{4mm}

\STATE $\vartriangleright$ \textsc{Target model training.}
\STATE \texttt{train}$(\xi^*, K^*)$ \\
\end{algorithmic}
\end{algorithm}

\section{Experiments}
\label{sec:exp}
In this section, we empirically validate our theory and study the effectiveness of the generality, efficiency, and effectiveness $\mu$Transfer-FNO algorithm (\cref{alg:main}). In particular, we aim at answering the following questions through experiments:

\begin{itemize}
    \item Does the $\mu$P for FNO stated in \cref{thm:main} preserve the optimal learning rate?
    \item Is the $\mu$Transfer-FNO algorithm generalizable to more advanced FNO training techniques, e.g., Physics Informed Neural Operator (PINO) \citep{li2024physics}?
    \item Is the $\mu$Transfer-FNO algorithm applicable to transfer hyperparameters across model scales beyond the learning rate?
    \item How much computation cost does the $\mu$Transfer-FNO algorithm save compare to the naive hyperparameter tuning strategy?  
\end{itemize}

We first detail the problem setup and training configurations in our empirical study, and then presents experimental results to answer the above research questions in the subsequent subsections. We release our code at \url{https://github.com/LithiumDA/muTransfer-FNO}.

\subsection{Problem setup and training configurations}
\label{sec:exp-setup}

\paragraph{Burgers' Equation.} We use the 1D Burgers' Equation as the testbed for the FNO-1D model. The equation takes the following form:
\begin{equation*}
    \begin{cases}
        \partial_t u(x,t) + \partial_x (u^2(x,t)/2) = \nu\partial_{xx}u(x,t) & t \in (0,1]\\
        u(x,0) = u_0(x)& x \in (0,1)
    \end{cases}
\end{equation*}
where $u_0$ is the initial condition and $\nu$ is the viscosity coefficient. We assume periodic boundary conditions. In the above equation, $x$ is the spatial variable, while $t$ is the temporal variable, which is slightly different from the notation in \cref{eq:general-pde} where $x$ is denotes as the spatiotemporal variable in the PDE. We hope this abuse of notations does not confuse the readers.

We aim to learn the mapping from the initial condition to the solution, $\gG: u_0(\cdot) \mapsto u(\cdot, 1)$. The spatial domain is discretized into 8192 grids and downsampled to 1024 during training. The viscosity coefficient is set to $0.1$. We train FNO-1D with 4 layers, 64 hidden dimensions, the $\mathrm{GELU}$ activation function, and varying numbers of Fourier modes $K$. We use $800/200$ samples for training/evaluation. The model is trained with a batch size of 20 for 750 epochs, and the learning rate halves every 50 epochs. We use Adam \citep{kingma2015adam} as the optimizer and set its $(\beta_1, \beta_2)$ to $(0.9, 0.999)$ in this and all the subsequent experiments, unless otherwise specified.

\paragraph{Darcy Flow.} We test FNO-2D models on the steady-state of the 2D Darcy Flow on the unit square. It is a second-order linear elliptic PDE:
\begin{equation*}
    \begin{cases}
        -\nabla \cdot (a(x)\nabla u(x)) = 1 & x \in (0,1)^2\\
        u(x) = 0 & x \in \partial(0,1)^2
    \end{cases}.
\end{equation*}
We train FNO-2D to learn the mapping from the PDE coefficient to the solution, $\gG: a \mapsto u$. The data resolution is $421 \times 421$ and downsampled $61 \times 61$ during training. We train FNO-2D with 4 layers, 64 hidden dimensions, the $\mathrm{GELU}$ activation function, and varying numbers of Fourier modes $K$. We use $1000/500$ samples for training/evaluation. The model is trained with a batch size of 20 for 300 epochs. The learning rate halves at the 100th, 150th, and 200th epoch by default. We also explore other batch sizes and optimizer hyperparameters in \cref{sec:exp-more-HP}.

\paragraph{Navier-Stokes Equation.} We evaluate FNO-3D on the incompressible Navier-Stokes equations for a viscous, incompressible fluid in vorticity form on a torus in the 2D space:
\begin{equation*}
\begin{cases}
\partial_t \omega + u \cdot \nabla \omega = \frac{1}{\mathrm {Re}}\Delta \omega + f & x \in [0,2\pi]^2, t \in (0, T] \\
\nabla \cdot u(x,t) =0   & x \in [0,2\pi]^2, t \in [0, T]\\
\omega(x,0) = \omega_0(x) & x \in [0,2\pi]^2
\end{cases}.
\end{equation*}

In this equation, $\omega$ is the vorticity field, $u$ is the velocity field, $\mathrm {Re}$ is the Reynolds number characterizing fluid viscosity, and $f$ is the forcing term. The equations use periodic boundary conditions. The data is generated on $512\times 512$ spatial grids and downsampled to $64\times 64$ resolution. The temporal resolution is 512 steps per unit time, and we set the time interval $[0,T]$ to $[0, 0.125]$. We set the Reynolds number $\mathrm {Re}$ to 500 and the forcing term $f$ to $-4\cos(4x_2)$.

We train FNO-3D to do spatial-temporal convolution jointly following \citet{li2021fourier, li2024physics, george2024incremental}. The model contains 4 layers, 64 hidden dimensions, the $\mathrm{GELU}$ activation function, and varying numbers of Fourier modes $K$. We use $800/200$ samples for training/evaluation. The model is trained with a batch size of 2 for 50000 iterations. The learning rate halves at the 20000th and 40000th iteration. For $\mu$Transfer-FNO, we apply element-wise gradient clipping to the kernel integral operator module and the clip value is set to $0.01$.

The experiments for FNO-1D and FNO-2D are run on a NVIDIA GeForce RTX 2080 Ti GPU with 11GB memory. The experiments for FNO-3D are run on a NVIDIA RTX A6000 GPU with 50GB memory. Our code is built upon the open source library for Physics-informed Neural Operator\footnote{\url{https://github.com/neuraloperator/physics_informed}} using \texttt{PyTorch} \citep{pytorch}.

\begin{figure*}[!ht]

    \begin{subfigure}[b]{0.300\linewidth}
        \includegraphics[width=\linewidth]{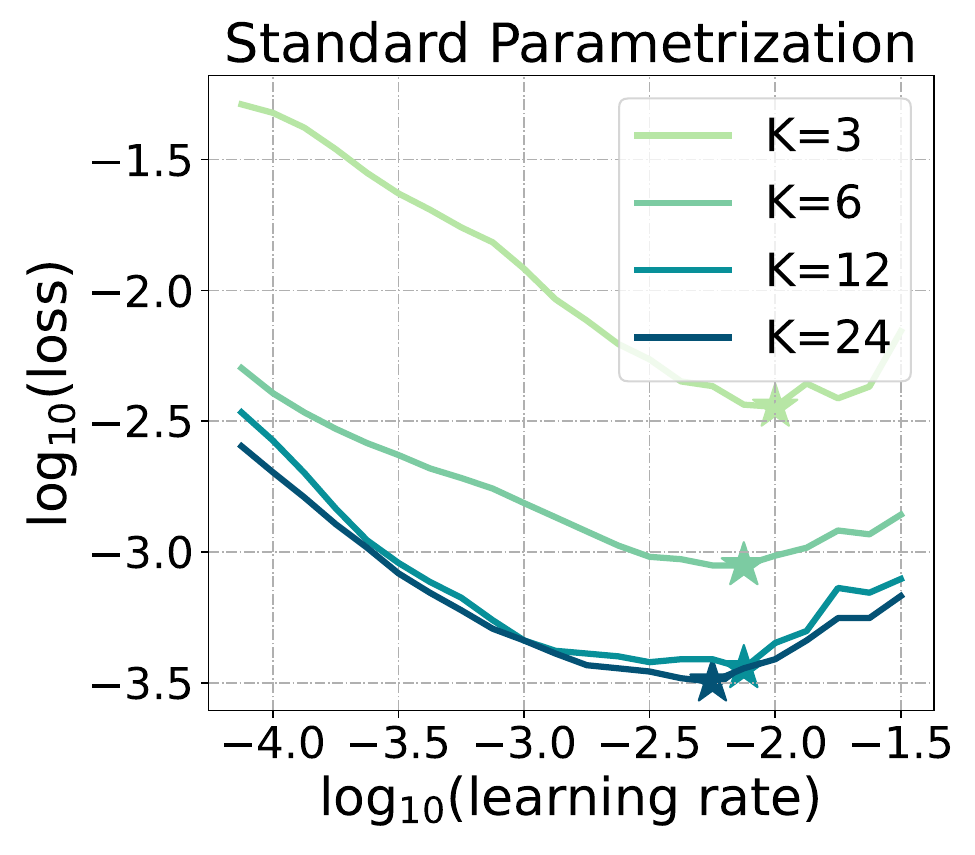}
        \includegraphics[width=\linewidth]{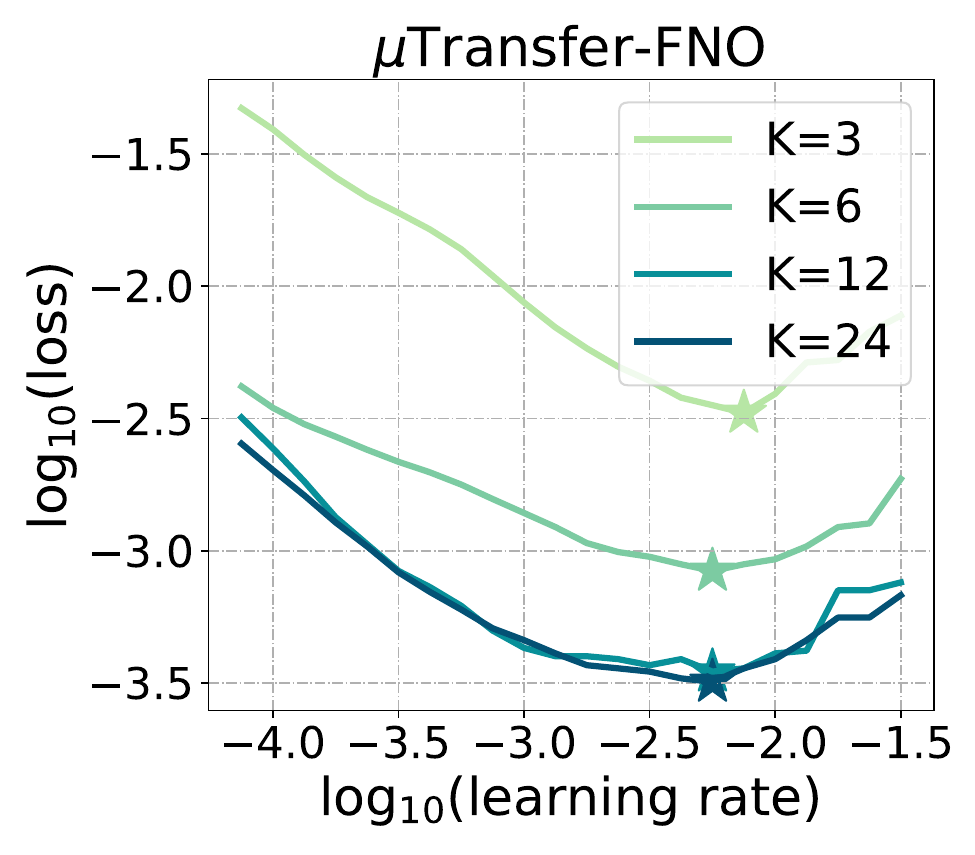}
        \caption{FNO-1D on Burgers' Equation.}
        \label{fig:fno-burgers}
    \end{subfigure}\hfill
    \begin{subfigure}[b]{0.300\linewidth}
        \includegraphics[width=\linewidth]{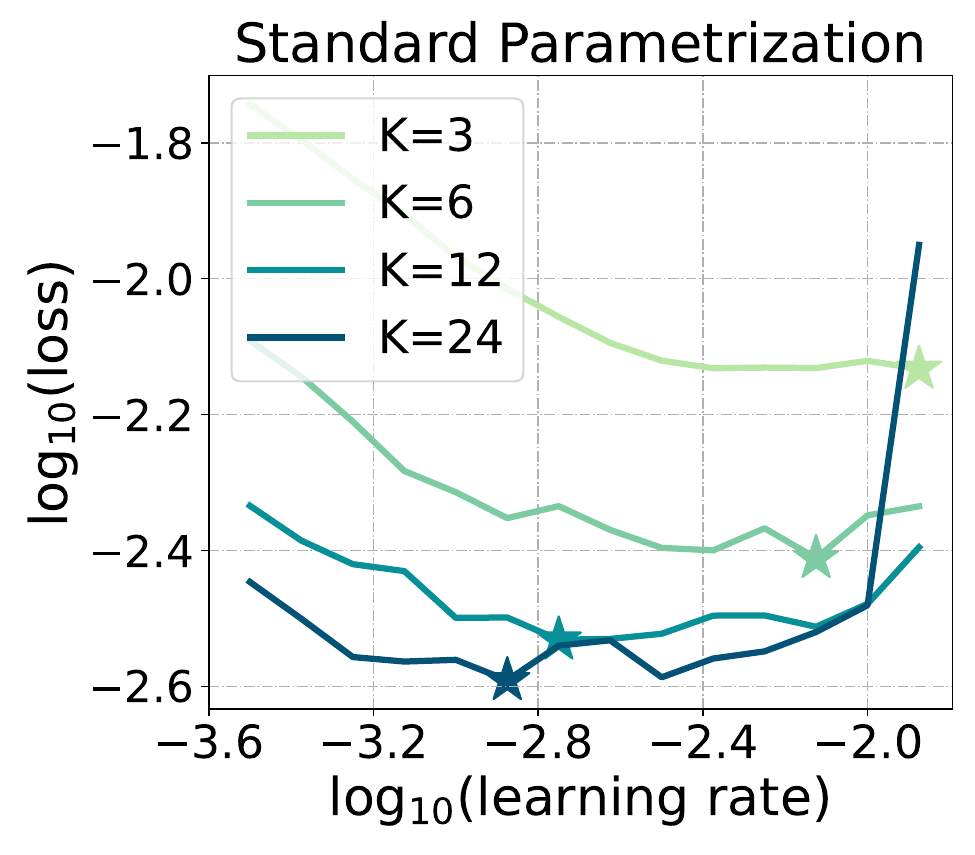}
        \includegraphics[width=\linewidth]{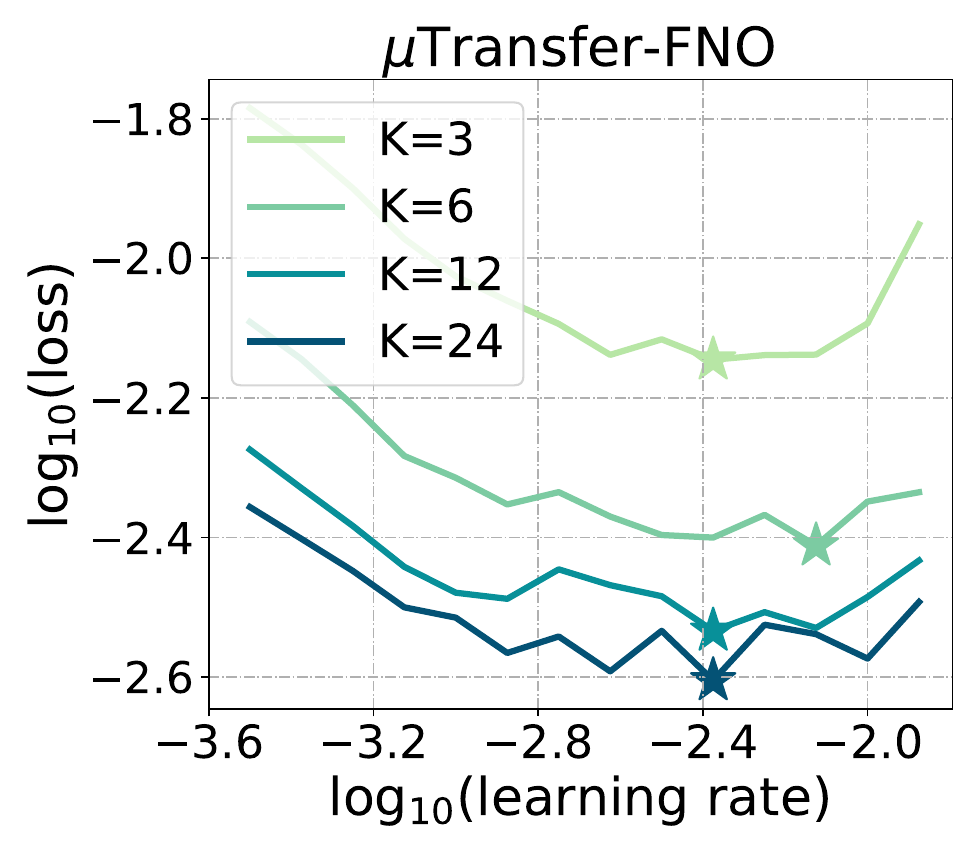}
        \caption{FNO-2D on Darcy Flow.}
        \label{fig:fno-darcy}
    \end{subfigure}\hfill
    \begin{subfigure}[b]{0.300\linewidth}
        \includegraphics[width=\linewidth]{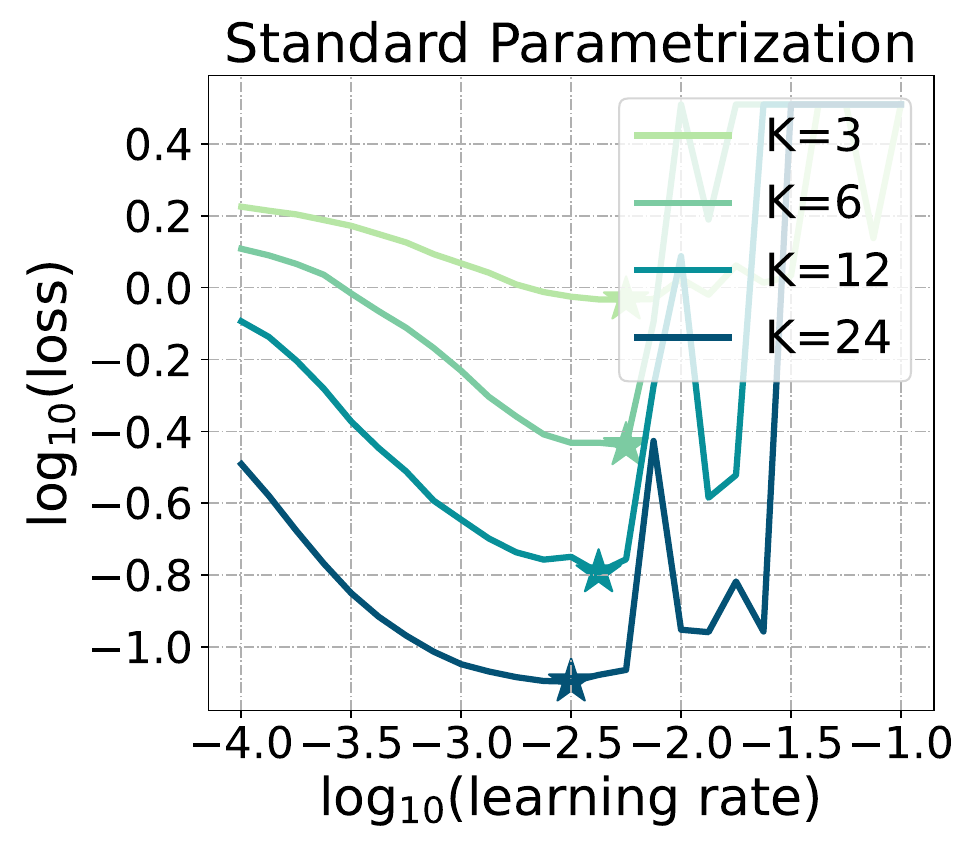}
        \includegraphics[width=\linewidth]{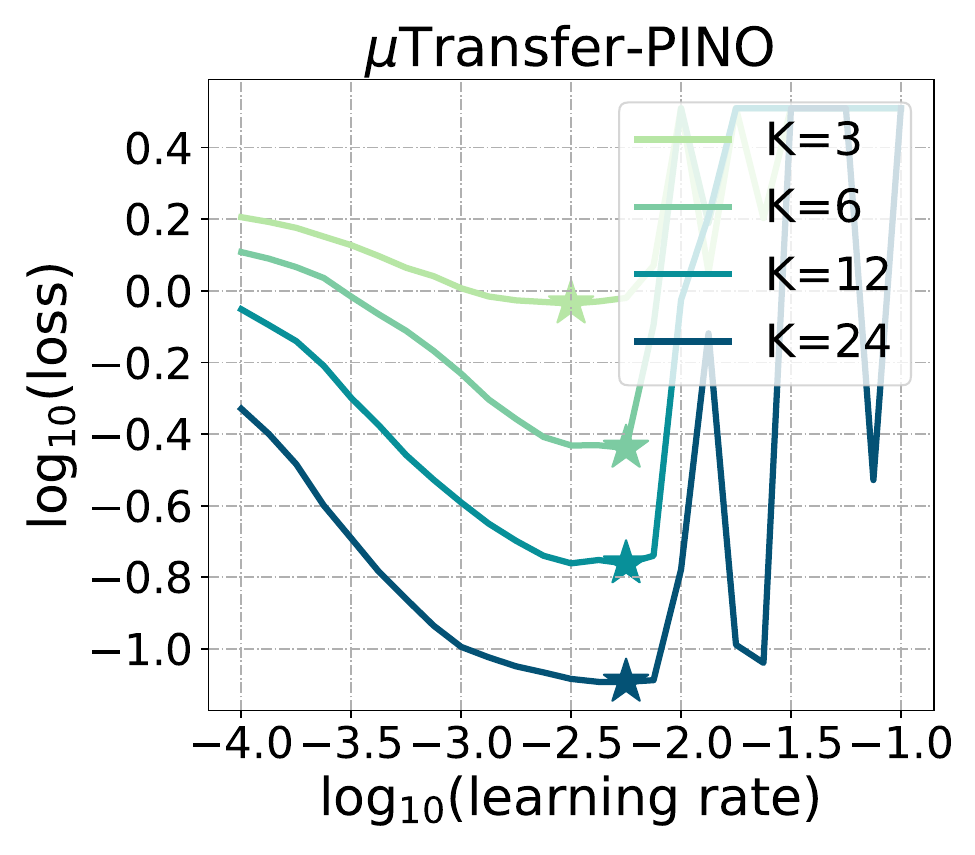}
        \caption{PINO-2D on Darcy Flow.}
        \label{fig:pino-darcy}
    \end{subfigure}
    \vspace{-1mm}
    \caption{\textbf{Loss values of FNOs/PINOs with varying numbers of Fourier modes $K$ and different learning rates}. The star on each curve marks the optimal learning rate which leads to the lowest loss value. We observe that the optimal learning rate is more consistent under $\mu$Transfer,  compared to the standard parametrization.}
    \label{fig:burgers-and-darcy}
\end{figure*}

\subsection{Identifying optimal learning rates under scaling}
\label{sec:exp-lr}
We first examine how the learning rate landscape changes as we scale up the number of Fourier modes $K$ on the three PDE problems. \Cref{fig:fno-burgers,fig:fno-darcy,fig:teaser} show the loss values under both standard parametrization and $\mu$Transfer-FNO trained with different learning rates.

Under standard parametrization (the left panel in \cref{fig:teaser} and the top row in \cref{fig:burgers-and-darcy}), we consistently observe a clear shift in the optimal learning rate as $K$ increases. For example, for FNO-3D on the the incompressible Navier-Stokes Equation, the optimal learning rate (marked by stars) shifts from $1.8\times 10^{-3}$ when $K=3$ to $7.4\times 10^{-4}$ when $K=24$. This shift indicates that naively transferring hyperparameters tuned on small models to larger ones leads to suboptimal training configurations.

In contrast, when using $\mu$Transfer-FNO (the right panel in \cref{fig:teaser} and the bottom row in \cref{fig:burgers-and-darcy}), the optimal learning rates remain remarkably stable across different values of $K$. For example, for FNO-2D on the the incompressible Navier-Stokes Equation, the optimal learning rate align approximately at $4.2\times 10^{-3}$. This stability validates our argument in \cref{thm:main,sec:alg}. That being said, the optimal learning rate for a certain model size may sometimes deviate from the expected value, e.g., on the curves of $K=6$ in \cref{fig:teaser} (right), $K=3$ in \cref{fig:fno-burgers} (bottom), and $K=6$ in \cref{fig:fno-darcy} (bottom). We note that this instability typically occurs when $K$ is very small and attribute this to the randomness during training.

Furthermore, we observe that under $\mu$Transfer-FNO, the loss curves exhibit more consistent shapes across different $K$ values, suggesting that the overall optimization landscape becomes more uniform.

\begin{figure*}
    \begin{subfigure}[b]{0.495\linewidth}
        \includegraphics[width=0.495\linewidth]{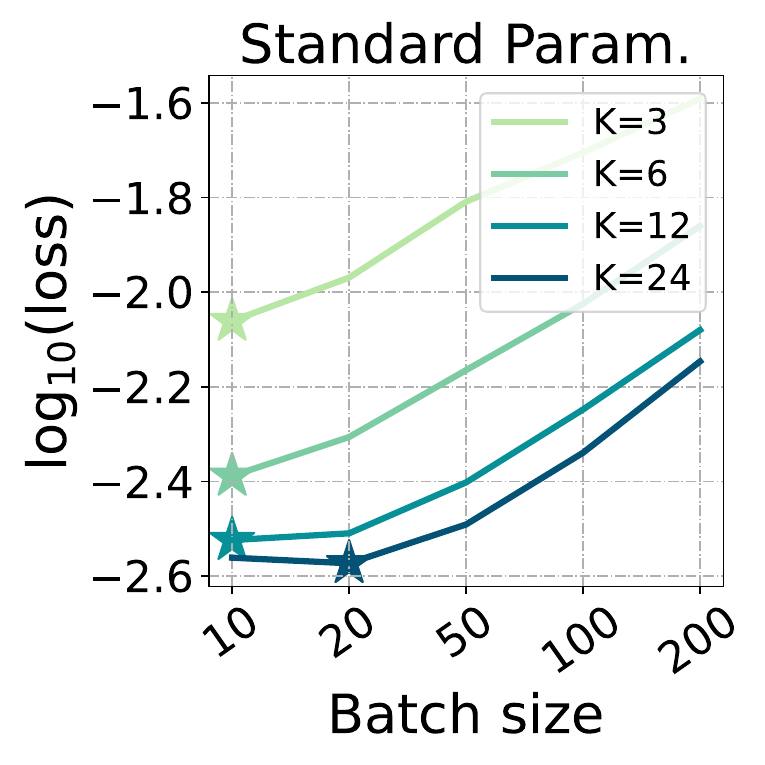}
        \includegraphics[width=0.495\linewidth]{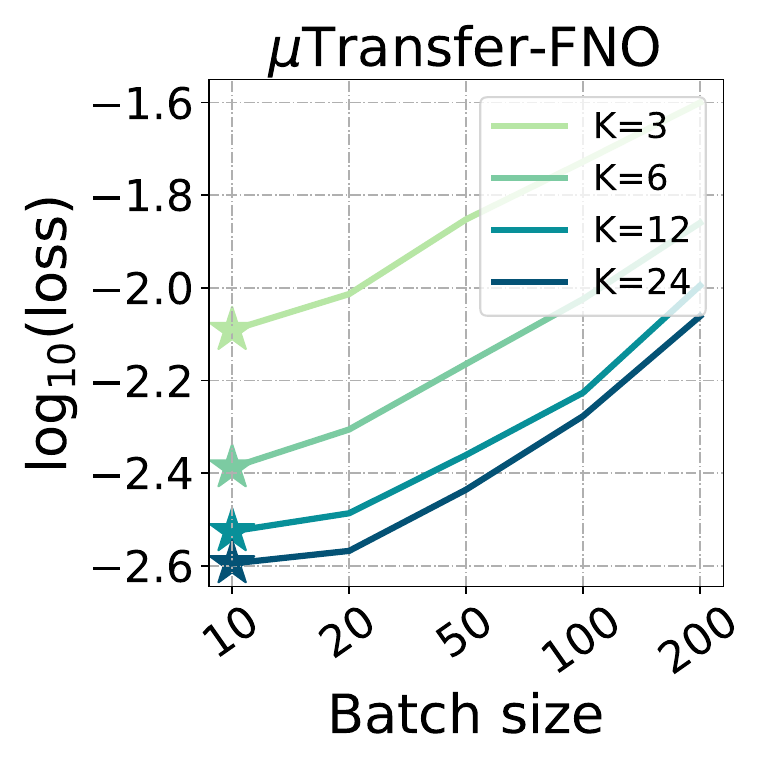}
        \caption{Batch size.}
        \label{fig:darcy-bsz}
    \end{subfigure}
    \begin{subfigure}[b]{0.495\linewidth}
        \includegraphics[width=0.495\linewidth]{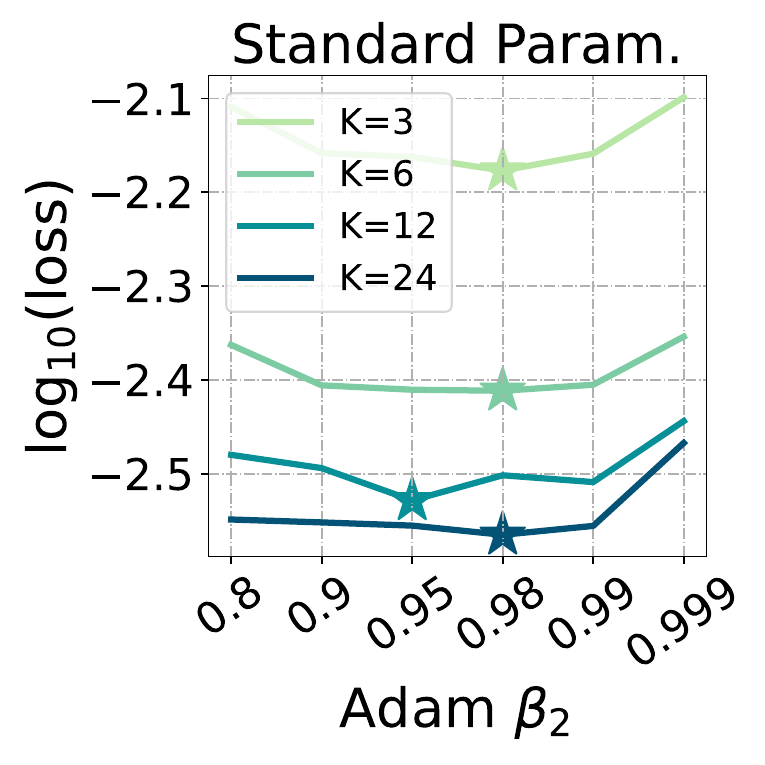}
        \includegraphics[width=0.495\linewidth]{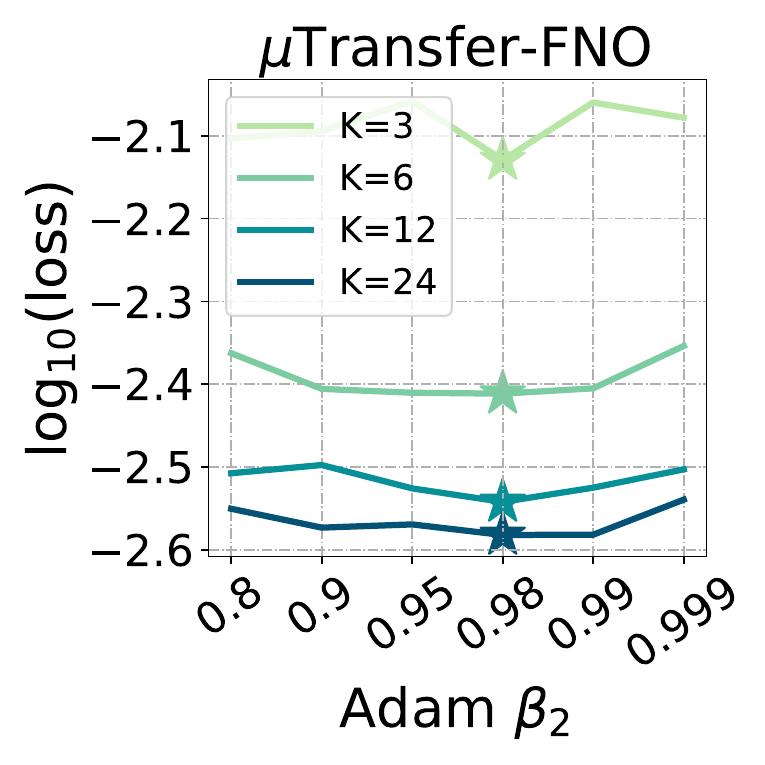}
        \caption{Optimizer hyperparameter.}
        \label{fig:darcy-adam-beta}
    \end{subfigure}
    \vspace{-4mm}
    \caption{\textbf{Loss values of FNO-2D on the Darcy Flow problem with different hyperparameters} and varying numbers of Fourier modes $K$. The star on each curve marks the optimal setting which leads to the lowest loss value. We observe that the optimal hyperparameter is more consistent under $\mu$Transfer, compared to the standard parametrization.}
    \label{fig:more-HP}
\end{figure*}

\subsection{Applications to Physics-Informed Neural Operators}
\label{sec:exp-pino}

All the results presented in \cref{sec:exp-lr} are based on standard supervised training of FNO. In this subsection, we further evaluate the effectiveness of our approach on Physics-Informed Neural Operators (PINO) which incorporates PDE constraints into the training objective \citep{li2024physics}. Note that PINO does not change the architectural design of FNO. Hence the parametrization scheme developed in \cref{thm:main} is still valid. We refer to our approach in this setting as $\mu$Transfer-PINO. We use the same training recipe as described in \cref{sec:exp-lr}, and set the ratio between the data loss and physics-informed loss to $5:1$.

We present experimental results of the standard parametrization and $\mu$Transfer-PINO on the Darcy flow problem in \Cref{fig:pino-darcy}. The results share similar trends with those in \cref{sec:exp-lr}. Under standard parametrization, PINO exhibits  sensitivity to the learning rate as $K$ increases. The optimal learning rate gradually decreases when $K$ increase. 
In contrast, $\mu$Transfer successfully stabilizes the learning rate landscape across different $K$ values for PINO, with the optimal learning rates aligning at approximately $5.6\times 10^{-3}$. 
This result is noteworthy given that PINO's training dynamics are typically more complex due to the additional physics-informed loss terms.
The consistent performance across both standard FNO and PINO shows the generality of \cref{thm:main}, independent of the specific training objective variations.

\subsection{Applications to more hyperparameters}
\label{sec:exp-more-HP}

The experiments in \cref{sec:exp-lr,sec:exp-pino} all focus on the learning rate. In this subsection, we investigate whether $\mu$Transfer-FNO's benefits extend to other hyperparameters beyond learning rates. Specifically, we consider tuning the batch size and Adam's $\beta_2$ parameter for FNO-2D on the Darcy flow problem.

Our experimental results are shown in \cref{fig:more-HP}. For batch size (\cref{fig:darcy-bsz}), standard parametrization exhibits varying optimal values across different $K$: larger models ($K=24$) require a larger batch size, while smaller models perform better with smaller batches. In contrast, under $\mu$Transfer-FNO, the optimal batch size is consistently 20 regardless of model size. 
Similarly, for Adam's $\beta_2$ parameter (\cref{fig:darcy-adam-beta}), standard parametrization shows shifting optimal values as $K$ increases. With $\mu$Transfer-FNO, the optimal $\beta_2$ aligns at 0.98 across all model sizes. This consistency further validates that $\mu$Transfer-FNO stabilizes the overall optimization dynamics. 
These results demonstrate that $\mu$Transfer-FNO enables reliable hyperparameter transfer across model scales for multiple optimization-related hyperparameters, making it a practical tool for efficient large-scale FNO training.

\begin{table*}
\vspace{-3mm}
\caption{\textbf{Test relative error comparisons} between directly tuning the large model and $\mu$Transfer-FNOs.}

\label{tab:main}
\centering
\begin{tabular}{ccccc}
\toprule
Model &  \multicolumn{2}{c}{FNO-2D} & \multicolumn{2}{c}{FNO-3D} \\
Problem &  \multicolumn{2}{c}{Darcy Flow} & \multicolumn{2}{c}{Navier-Stokes Equation} \\ 
Metric &  $L^2$ relative error & Total training cost & $L^2$ relative error & Total training cost \\ \midrule
Tune the full model &  1.25\% & \textbf{1}$\times$ & 5.69\% & 1$\times$ \\
$\mu$Transfer-FNO (ours)  & \textbf{1.22}\% & 1.38$\times$ & \textbf{5.34}\% & \textbf{0.30}$\times$ \\ 
\bottomrule
\end{tabular}
\vspace{-2mm}
\end{table*}

\subsection{Practical effectiveness of $\mu$Transfer-FNO}

To demonstrate the effectiveness of $\mu$Transfer-FNO in practice, we compare model performances of directly tuning the large model and applying the $\mu$Transfer-FNO algorithm. The results are reported in \cref{tab:main}. First, we note that $\mu$Transfer-FNO leads to lower $L^2$ relative test error on the test set. This is natural because $\mu$Transfer-FNO sweeps hyperparameters on small models, hence can use a larger hyperparameter search space. Second, $\mu$Transfer-FNO surpasses the test performance of the naive hyperparameter tuning method on the Navier-Stokes Equation while only uses $0.30\times$ of the total training compute, which clearly demonstrates the benefit of tuning small proxy models to identify the hyperparameter choice. Finally, we note that $\mu$Transfer-FNO is slower than the naive approach on the Darcy Flow problem because the training cost gap between small models and large models is not extremely large in this setting. In practice, $\mu$Transfer-FNO holds stronger potential in the large-scale FNO training settings, e.g., when scaling to billion-parameter sizes.

\section{Related Works}

\subsection{Neural PDE solvers}

with the growth of available data and computational resources, there have been growing interests in developing neural network approaches to partial differential equation solving \cite{han2018solving,khoo2021solving,kochkov2021machine, wang2022is}. One stream of works leverages neural networks as a new ansatz of solutions, notable examples include \cite{sirignano2018dgm,raissi2019physics,zang2020weak,he2023learning}. Another active research direction is operator learning, where neural networks serve as an ansatz of solution operator \cite{long2018pde,long2019pde,lu2021learning,li2021fourier,cao2021choose,kovachki2023neural,wang2024beyond}. A few works present approaches to combining the strengths of these two paradigms \cite{huang2022meta,li2024physics}.

Our work falls into the operator learning category and specifically focuses on tuning large-scale FNO. There are several prior works studying a similar topic: \citet{george2024incremental} introduce Incremental Fourier Neural Operator (iFNO) which progressively increases both the number of Fourier modes and the training data resolution to enhance training efficiency. 
\citet{alkin2024universal} propose Universal Physics Transformers (UPTs) as an efficient and unified approach to a wide range of spatio-temporal problems by propagating dynamics in the latent space. While this paper presents improved model architecture for operator learning, our work studies efficient hyperparameter searching for the most commonly-used FNO architecture.

\subsection{Theoretical analysis on FNOs}

Theoretical analysis of FNOs provides important research insights into model designs and applications. \citet{kovachki2021universal} prove the universal approximation theorem for FNOs. \citet{ryck2022generic} prove the approximation error bounds on physics-informed operator learning. \citet{koshizuka2024understanding} delve into the expressivity and trainability of FNOs using mean-field theory. \citet{kim2024bounding} study the generalization capabilities of FNOs by analyzing their Rademacher complexity. \citet{le2024mathematical} systematically present a series of mathematical analysis of neural operator. Our work formulates and derives the Maximal Update Parametrization of FNO and presents the application to zero-shot hyperparameter transfer.

\subsection{$\mu$P, $\mu$Transfer, and Scaling in deep learning}

Our work builds upon the $\mu$P \citep{yang2021tensor} and $\mu$Transfer \citep{yang2022tensor} framework. The framework is initially introduced for analyzing the limits under model width scaling and gradient descent training, and later extended to depth scaling \citep{yang2024tensor, bordelon2024depthwise, chen2024principled} and second-order optimization \citep{ishikawa2024on}. 
\citet{dey2023cerebras,lingle2024large} present empirical applications of $\mu$Transfer on Transformer-based large language model training.  
\citet{blake2024up} incorporate Unit Scaling into $\mu$P, combining the advantages of hyperparameter transfer and stable training in FP8 precision.
\citet{noci2024super} show some properties of the loss landscape are preserved across model sizes under the right scaling of the architecture.
Our work is, to the best of our knowledge, the first analysis on $\mu$P and $\mu$Transfer for PDE solving and scaling up the FNO architecture.

More broadly, scaling up deep learning models is currently an active research topic. Existing research has derived principled scaling laws and demonstrated impressive empirical success across a wide range of domains, including board games \citep{jones2021scaling}, language modeling \citep{kaplan2020scaling,hoffmann2022training,openai2023gpt4}, image modeling \citep{henighan2020scaling,yu2022scaling,peebles2023scalable}, mathematical reasoning \citep{snell2024scaling, wu2024inference}, and very recently, PDE solving \citep{fan2025physics,li2025codepde}. We believe this direction holds strong promise in neural PDE solvers and that our work contributes to this emerging area.

\section{Conclusions and Limitations}

\paragraph{Conclusions.} We introduce $\mu$Transfer-FNO, a principled method for zero-shot hyperparameter transfer in FNO. We derive the $\mu$P for FNO theoretically, and our proof offers new technical insights to the study of $\mu$P. On a wide range of PDEs, we experimentally demonstrate that the parametrization scheme preserves the optimal hyperparameters under various setting. Our results show that $\mu$Transfer-FNO is theoretically-principled and can serve as a practical and efficient strategy for tuning neural PDE solvers in billion-parameter scales.

\vspace{-2.2mm}
\paragraph{Limitations.} Although FNO is widely used in neural network methods for PDE solving, there are many other neural operator variants including Multipole Graph Neural Operator \citep{li2020multipole}, DeepONets \citep{lu2021learning}, Multi-wavelet NO \citep{gupta2021multiwavelet}, and Transformer-based foundation models \citep{ye2024pdeformer}. Our results are dedicated to FNO and extending the results to more general neural operator variants can be avenues for future research.

\section*{Acknowledgements}

The authors would like to thank the constructive suggestions from Chuwei Wang (Caltech) and the anonymous reviewers.

This material is based upon work supported in part by the U.S. Department of Energy, Office of Science, Office of Advanced Scientific Computing Research under Contract DE-AC02-06CH11357. This research uses resources of the Oak Ridge Leadership Computing Facility and NERSC, which are a DOE Office of Science User Facility supported under Contract DE-AC05-00OR22725 and under Contract No. DE-AC02-05CH11231 using NERSC award ALCC-ERCAP0034238.

\clearpage

\section*{Impact Statement}

This paper presents work whose goal is to advance the field of 
Machine Learning. There are many potential societal consequences 
of our work, none which we feel must be specifically highlighted here.



\bibliography{reference}
\bibliographystyle{icml2025}

\newpage
\appendix
\onecolumn
\section{Omitted theoretical results}
\label{app:proofs}

\subsection{Technical lemmas}

We first present a few technical lemma as preparations for the main proof.

\begin{lemma}\label{lemma:linear-op}
    For any $\mR$, $\gK$ is a linear operator.
\end{lemma}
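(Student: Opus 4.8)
The plan is to read off from the definition in \cref{eq:kernel-int-def} that $\gK$ is a composition of four elementary maps, each linear in the input function $v$ with the parameter tensor $\mR$ held fixed, and then to invoke the elementary fact that a composition of linear maps is linear. Concretely, I would write $\gK = \gF^{-1} \circ M_{\mR} \circ \gT_K \circ \gF$, where $M_{\mR}$ denotes multiplication by $\mR$ in the frequency domain, i.e.\ $(M_{\mR}\hat v)_k = \mR_k \hat v_k$ for each retained mode $k$, with $\mR_k \in \sR^{m\times m}$ and $\hat v_k \in \sR^m$.

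First I would check the four factors in turn. The ($d$-dimensional, discrete) Fourier transform $\gF$ and its inverse $\gF^{-1}$ act coordinate-wise on the $m$ feature channels and, along the spatial axes, amount to multiplication by the (inverse) DFT matrix, hence are linear. The truncation $\gT_K$ keeps the lowest $K$ Fourier modes in each dimension and zeros out the rest; this is a coordinate projection onto a subspace and is therefore linear. Finally, $M_{\mR}$ is linear because, for each mode $k$, the map $\hat v_k \mapsto \mR_k \hat v_k$ is matrix--vector multiplication, hence linear in $\hat v_k$, and assembling these over all $K^d$ modes preserves linearity. Composing, for any scalars $\alpha,\beta$ and any admissible inputs $v,w$ (elements of the relevant function space, or of $\sR^{N_1\times\cdots\times N_d\times m}$ in the discretized setting), one obtains $\gK(\alpha v + \beta w) = \alpha\,\gK v + \beta\,\gK w$, which is the claim.

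I do not expect any genuine obstacle here; the lemma is routine. The only points worth stating carefully are that the asserted linearity is with respect to the input $v$ while $\mR$ is fixed (the auxiliary map $\mR \mapsto \gK$ is also linear, but that is not what is needed), and that the ambient spaces are genuine vector spaces so the statement is well posed. This lemma is then used downstream to split the action of $\gK$ into its value at initialization and its update, which is the decomposition underlying the spectral-norm bounds in the proof of \cref{thm:main}.
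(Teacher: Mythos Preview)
Your proof is correct and essentially identical to the paper's own argument: both decompose $\gK$ as the composition of the Fourier transform, truncation, multiplication by $\mR$, and the inverse Fourier transform, observe that each factor is linear in $v$, and conclude by composition. One minor remark on your closing commentary: the downstream decomposition of $\gK$ into its initialization and update actually relies on linearity in $\mR$ (\cref{lemma:linear-in-r}), whereas the present lemma is used to justify writing the discretized $\gK$ as a matrix (\cref{eq:eigen-K}) so that its spectral norm can be computed.
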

\begin{proof}
    We note that the Fourier transform, the truncation operator, the multiplication with $\mR$, and the inverse Fourier transform are all linear operators. $\gK$, as the composition of these linear operators, is still a linear operator.
\end{proof}

To highlight the kernel integral's dependency on the parameter tensor $\mR$, we slightly abuse notation and write $\gK=\gK^{\mR}$.

\begin{lemma}\label{lemma:linear-in-r}
    $\gK^{\mR}$ is linear in $\mR$.
\end{lemma}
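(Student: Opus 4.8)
The plan is to show that $\gK^{\mR}$ depends linearly on $\mR$ by unwinding the definition in \cref{eq:kernel-int-def} and isolating the only place where $\mR$ enters, namely the pointwise multiplication step. Concretely, for an input function $v$, write $\hat v := \gT_K(\gF v) \in \sR^{K^d \times m}$ for the truncated Fourier coefficients, which does not depend on $\mR$ at all. The multiplication $\mR \cdot \hat v$ produces, for each of the $K^d$ frequency modes $k$, the vector $\mR_k \hat v_k \in \sR^m$ where $\mR_k \in \sR^{m\times m}$; this map $\mR \mapsto \mR \cdot \hat v$ is manifestly linear (it is bilinear in $(\mR,\hat v)$, hence linear in $\mR$ for fixed $\hat v$). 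Finally, $\gF^{-1}$ is applied; by \cref{lemma:linear-op} (or just directly, since $\gF^{-1}$ is linear) the composition preserves linearity in the $\mR$-slot.

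First I would fix an arbitrary input $v$ and two parameter tensors $\mR, \mR' \in \sR^{K^d\times m\times m}$ together with scalars $\alpha,\beta\in\sR$, and verify
\[
    \gK^{\alpha\mR + \beta\mR'} v
    = \gF^{-1}\!\left[ (\alpha\mR+\beta\mR')\cdot \gT_K(\gF v)\right]
    = \gF^{-1}\!\left[ \alpha\,\mR\cdot\gT_K(\gF v) + \beta\,\mR'\cdot\gT_K(\gF v)\right],
\]
using only the entrywise definition of the mode-wise matrix-vector products and the distributivity of that product over addition and scalars. Then I would pull $\alpha$ and $\beta$ through $\gF^{-1}$ by its linearity, obtaining $\alpha\,\gK^{\mR} v + \beta\,\gK^{\mR'} v$. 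Since $v$ was arbitrary, this gives $\gK^{\alpha\mR+\beta\mR'} = \alpha\gK^{\mR} + \beta\gK^{\mR'}$ as operators, which is exactly linearity in $\mR$.

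There is essentially no obstacle here: the statement is immediate once one observes that $\gT_K\circ\gF$ is independent of $\mR$ and that the only $\mR$-dependent operation, the per-mode matrix multiplication, is bilinear. The only thing to be mildly careful about is bookkeeping of the index structure of $\mR\cdot\gT_K(\gF v)$ — that the contraction is over the hidden dimension $m$ and is applied independently across the $K^d$ modes — so that the distributive law applies modewise; but this is routine. This lemma is the natural companion to \cref{lemma:linear-op}: the former says $\gK^{\mR}$ is linear as a map on functions for fixed $\mR$, and this one says it is linear as a map in $\mR$ for fixed input, which together justify later manipulations (e.g.\ decomposing $\gK$ at initialization versus its update, or differentiating through $\gK$).
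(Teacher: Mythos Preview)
Your proposal is correct and follows essentially the same approach as the paper: both rely on the definition in \cref{eq:kernel-int-def}, the fact that $\gT_K\circ\gF$ is independent of $\mR$, and the linearity of $\gF^{-1}$. The paper simply states this in one line, while you spell out the bilinearity of the mode-wise multiplication and verify the identity on an arbitrary $v$; the content is the same.
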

\begin{proof}
    The proof is straightforward based on the definition in \cref{eq:kernel-int-def} and the linearty of the inverse Fourier transform.
\end{proof}

\begin{lemma}[Feature learning condition]
\label{lemma:feature-learning-condition}
    A parametrization of FNO satisfies feature learning in every layer if and only if the followings are both true with high probability
    \begin{itemize}
        \item The feature coordinate size $h_{\ell, t}$ remain stable, i.e., $h_{\ell, t}=\Theta(1)$ for every $\ell\in\{0\}\cup[L]$ and $t\in\sN$.
        \item $\gK_{\ell}$ updates maximally for every $\ell\in[L]$ and $t\in\sN^*$, i.e., $\Delta_t \gK_{\ell}h_{\ell-1, t}=\Theta(1)$ where $\Delta_t\gK_{\ell} = \gK_{\ell,t}-\gK_{\ell,0}$.
    \end{itemize}    
\end{lemma}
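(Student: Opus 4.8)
The plan is to prove \cref{lemma:feature-learning-condition} by directly unfolding the definition of ``feature learning in every layer'' from \cref{def:mup-fno} and using the structure of the FNO forward pass. Recall that feature learning in every layer requires $\Delta_t w_\ell = w_{\ell,t} - w_{\ell,0} = \Theta(1)$ for every $\ell \in [L]$ and $t \in \sN^*$, and that the $\mu$P definition also bundles in stability at initialization ($h_{\ell,0}, w_{\ell,0} = \Theta(1)$). Since $w_\ell = (\mW_\ell + \gK_\ell) h_{\ell-1}$ and $h_\ell = \phi(w_\ell)$, I would first establish the ``if'' direction: assume the feature coordinates $h_{\ell,t} = \Theta(1)$ are stable for all $\ell \in \{0\}\cup[L]$, $t\in\sN$, and that $\Delta_t \gK_\ell h_{\ell-1,t} = \Theta(1)$. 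Then I decompose
\begin{equation*}
\Delta_t w_\ell = (\mW_{\ell,t} + \gK_{\ell,t}) h_{\ell-1,t} - (\mW_{\ell,0} + \gK_{\ell,0}) h_{\ell-1,0}.
\end{equation*}
Because $m$ and $L$ are fixed and $\mW_\ell$ is a finite-dimensional pointwise linear map whose width-scaling behaviour is governed by the classical $\mu$P results of \citet{yang2021tensor, yang2024tensor}, the contribution of the $\mW_\ell$ terms and the cross terms coming from the stable shift of $h_{\ell-1}$ is $\Theta(1)$; the genuinely new ingredient under $K$-scaling is the $\gK_\ell$ term, which contributes $\Delta_t \gK_\ell h_{\ell-1,t} = \Theta(1)$ plus $\gK_{\ell,0}(h_{\ell-1,t}-h_{\ell-1,0})$, and since $\gK_{\ell,0}$ is $\Theta(1)$ in spectral norm at initialization (by the stability-at-initialization part, which is exactly what \cref{thm:main} engineers) and the feature shift is $\Theta(1)$, everything combines to give $\Delta_t w_\ell = \Theta(1)$. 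I would carry out the telescoping over $\ell$ inductively, using that $\phi$ is Lipschitz with bounded derivative (\cref{assu:activation}) so that stability and updates propagate through the activation without changing the order.

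For the ``only if'' direction I would run essentially the same decomposition in reverse: assuming $\Delta_t w_\ell = \Theta(1)$ for all layers, I induct on $\ell$ starting from $h_0 = \gP v = \Theta(1)$ (stability of the input encoding, again a fixed-width fact) to show $h_{\ell,t} = \Theta(1)$, using that $w_{\ell,t} = w_{\ell,0} + \Delta_t w_\ell = \Theta(1)$ and $h_{\ell,t} = \phi(w_{\ell,t})$ with bounded $\phi$. Having established stability of all the $h_{\ell,t}$, I then isolate $\Delta_t \gK_\ell h_{\ell-1,t}$ from $\Delta_t w_\ell$ by subtracting off the $\mW_\ell$ contribution and the $\gK_{\ell,0}(h_{\ell-1,t}-h_{\ell-1,0})$ term, both of which are $\Theta(1)$ or smaller, concluding $\Delta_t \gK_\ell h_{\ell-1,t} = \Theta(1)$.

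The main obstacle, and the place where I would spend the most care, is justifying that the $\mW_\ell$ and $\gP$, $\gQ$ contributions are genuinely $\Theta(1)$ and never dominate or vanish — i.e. cleanly importing the width-scaling $\mu$P results so that the $K$-scaling analysis can treat the pointwise linear layers as a fixed ``background'' and focus solely on $\gK_\ell$. A subtlety here is that ``$=\Theta(1)$'' for a vector/function-valued feature should be read coordinate-wise (as in the $\mu$P literature, the typical entry size), and one must check that the kernel integral operator's action, which couples all $K^d$ Fourier modes, still produces coordinate-wise $\Theta(1)$ outputs rather than, say, a single blown-up coordinate; this is precisely why the spectral-norm control of $\gK_\ell$ (both at initialization and for $\Delta_t\gK_\ell$) that the main proof derives is the right handle, and I would state the equivalence in \cref{lemma:feature-learning-condition} so that it feeds directly into that spectral-norm computation. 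A secondary but routine point is handling the ``with high probability'' qualifier uniformly over the finitely many layers $\ell\in[L]$ and over a fixed iteration $t$ via a union bound, which is harmless since $L$ is fixed.
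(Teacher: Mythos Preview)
Your proposal is correct but takes a genuinely different route from the paper. The paper does \emph{not} unfold \cref{def:mup-fno} directly. Instead it observes that $\gP$, each $\mW_\ell+\gK_\ell$, and $\gQ$ are linear operators over the discretized input, and then imports Conditions~A.1 and~A.2 of \citet{ishikawa2024on} as an off-the-shelf characterization of feature learning: ``maximal updates'' $\Delta_t(\mW_\ell+\gK_\ell)h_{\ell-1,t}=\Theta(1)$ together with ``maximal initialization'' $\gQ_0\,\Delta_t h_L=\Theta(1)$ at the output. It then reduces those two conditions to the lemma's two bullets by noting that $\Delta_t\mW_\ell$ and $\gQ_0$ do not scale with $K$. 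In particular, the output-layer condition on $\gQ_0$ appears in the paper's argument but never in yours, because you start from the paper's own \cref{def:mup-fno} (which is phrased purely in terms of the hidden pre-activations $w_\ell$) rather than from the imported characterization.

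What each approach buys: the paper's proof is extremely short but effectively outsources the hard work to the cited reference, and as written it only verifies the ``if'' direction carefully (it checks that the lemma's bullets imply the Ishikawa conditions; the converse is asserted via the cited equivalence but not argued from the lemma's bullets back). Your direct decomposition of $\Delta_t w_\ell$ handles both directions explicitly and is self-contained. The one place to be careful is the cross term $\gK_{\ell,0}(h_{\ell-1,t}-h_{\ell-1,0})$: you invoke $\|\gK_{\ell,0}\|_2=\Theta(1)$, but the lemma's first bullet only gives $h_{\ell,0}=\Theta(1)$, which does not by itself bound the operator norm of $\gK_{\ell,0}$ for a \emph{general} parametrization. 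You are implicitly using the specific $b(K)$ scaling of $\mu$P there. This is harmless in context (the lemma is only ever invoked under $\mu$P, and you flag exactly this point), but it means your equivalence, like the paper's, should be read with a ``general position''/high-probability caveat rather than as a purely algebraic identity.
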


\begin{proof}
    We note that $\gP$, $\mW_1+\gK_1$, $\cdots$, $\mW_L+\gK_L$, and $\gQ$ are all linear operators over the input functions, i.e., linear mappings over the input tensor in practice where functions are discretized. Therefore, the feature learning conditions (Conditions A.1 \& A.2 in \citep{ishikawa2024on}) still hold in our setting.

    According the conditions, FNO satisfies feature learning in every layer if and only if the following two conditions are satisfied with high probability:
    \begin{itemize}
        \item \textbf{Maximal updates:} $\Delta_t (\mW_{\ell}+\gK_{\ell})h_{\ell-1, t}=\Theta(1)$;
        \item \textbf{Maximal initialization:} $\gQ_{0}\Delta_t h_L=\Theta(1)$.
    \end{itemize}

    For the first condition, It is easy to note that $\Delta_t \mW_{\ell}$ does not scale when $K$ increases. Thus, $\Delta_t (\mW_{\ell}+\gK_{\ell})h_{\ell-1, t}=\Theta(1)$ is equivalent to $\Delta_t \gK_{\ell}h_{\ell-1, t}=\Theta(1)$, which we assume to be true (with high probability) in the lemma statement.

    For the second condition is easy to check because the feature coordinate size $h_{L, 0}, h_{L, t}=\Theta(1)$. Thus $\Delta_t h_L=\Theta(1)$. Also, the spectral norm of $\gQ_{0}$ is $\Theta(1)$ and does not scale with $K$. Thus $\gQ_{0}\Delta_t h_L=\Theta(1)$ in the general case.
    
    Therefore, the assumptions in the lemma statement implies Conditions A.1 \& A.2 in \citep{ishikawa2024on}, and hence implies feature learning.
\end{proof}

\clearpage

\subsection{Proof of \cref{thm:main}}

In this subsection, we present the complete proof of the main theoretical result, \cref{thm:main}.

\begin{proof}
    We are only interested in scaling up $K$ with $m$ fixed. Thus, without the loss of generality, we assume $m=1$. The proof can easily be extended to the case with a general $m$.

    In the kernel integral operator $\gK_{\ell}$, let the parameter $\mR_{\ell}=\left\{r_{\ell}^{\vk}\right\}_{\vk\in[K]^d}$ where $r_{\ell}^{\vk}\in\sR$.

    \textbf{Stability at initialization.} We first consider the model behavior at initialization and prove the stability condition by induction. Specifically, given $h_{0}=\Theta(1)$ at initialization, it suffices to show that for $\ell\in[L]$, $w_{\ell, 0} =\Theta(1)$ and $h_{\ell, 0} =\Theta(1)$ if $h_{\ell-1, 0}=\Theta(1)$. For ease of notations, we omit the subscript $0$ in the subsequent analysis on the initialization.

    By definition, $w_{\ell} = \mW_{\ell} h_{\ell-1} + \gK_{\ell} h_{\ell-1} $. Obviously, $\mW_{\ell-1}=\Theta(1)$.
    
    Regarding $\gK_{\ell} h_{\ell}$, \cref{lemma:linear-op} implies that $\gK_t$ is a linear operator. Given the discretized input $ h_{\ell}\in\sR^{N_1\times \cdots\times N_d}$, $\gK_{\ell}$ can be written as a matrix as follows:
    \begin{equation}\label{eq:eigen-K}
        \mK_{\ell} = \mF^{\mathsf H} \widetilde{\mR}_{\ell} \mF
    \end{equation}
    where $\mF$ denotes the $N_1\times \cdots\times N_d$ (multidimensional) discrete Fourier transform matrix, the superscript $\mathsf H$ denotes the conjugate transpose, and $\widetilde{\mR}_{\ell}\in\sR^{N_1 \cdots N_d\times N_1 \cdots N_d}$ accounts for the truncation operator and multiplication with $\mR_{\ell}$. The rows and columns are indexed by $\vj\in[N_1]\times \cdots\times [N_d]$. Specifically, its $(\vj, \vk)$-th entry is
    \begin{equation*}
        \widetilde{R}_{\ell}^{(\vj, \vk)} = \begin{cases}
            r_{\ell}^{\vk} & \vj= \vk \in[K]^d\\
            0 & \text{Otherwise}
        \end{cases}.
    \end{equation*}

    Note that $\mF$ is an orthogonal matrix and $\widetilde{\mR}$ is a diagonal matrix, hence \cref{eq:eigen-K} is the eigen-decomposition of $\mK_{\ell}$. Therefore, the spectral norm of $\mK_{\ell}$ is
    \begin{equation*}
        \|\mK_{\ell}\|_2 = \max_{\vk \in[K]^d} \left|r_{\ell}^{\vk}\right|.
    \end{equation*}

    Recall that $\{r_{\ell}^{\vk}\}_{\vk\in[K]^d}$ are $K^d$ i.i.d. random variables sampled from $\gN(0, b(K)^2)$. By standard result in high-dimensional probability \citep{vershynin2018high}, with high probability, 
    \begin{equation*}
        \|\mK_{\ell}\|_2 = \max_{\vk \in[K]^d} \left|r_{\ell}^{\vk}\right| = \Theta\left( b(K) \sqrt{d\log K} \right)= \Theta(1).
    \end{equation*}
    Also note that $h_{\ell-1}=\Theta(1)$. Thus, $\gK_{\ell} h_{\ell-1} =\Theta(1) $ and hence $w_{\ell} = \mW_{\ell} h_{\ell-1} + \gK_{\ell} h_{\ell-1}=\Theta(1)$. Finally, by \cref{assu:activation}, $h_{\ell}=\phi(w_{\ell})=\Theta(1)$. Therefore, we show the stability at initialization by induction on $\ell$.

    \textbf{Feature learning in every layer.}
    We begin with characterizing the updates of $\gK_{\ell, t}$ in each iteration $t$. For $t\in \sN$, by \cref{lemma:linear-in-r},
    \begin{equation*}
        \gK_{\ell, t+1} - \gK_{\ell, t} = \gK^{\mR_{\ell, t+1}} - \gK^{\mR_{\ell, t}} = \gK^{\mR_{\ell, t+1}-\mR_{\ell, t}} = \eta_0 c(K) \gK^{\vg_t(\mR_{\ell})},
    \end{equation*}
    where $\vg_t(\mR_{\ell})$ is defined in \cref{assu:subgaussian}. By \cref{assu:subgaussian}, using an argument similar to the above proof of stability at initialization, we have
    \begin{equation*}
        \|\mK_{\ell, t+1} - \mK_{\ell, t}\|_2 = \Theta(1),
    \end{equation*}
    where $\mK_{\ell, t}$ denotes the discretized operator $\gK_{\ell, t}$.
        
    Now we prove feature learning in every layer by verifying the conditions in \cref{lemma:feature-learning-condition}:
    \begin{itemize}
        \item \textit{The feature coordinate size $h_{\ell, t}$ remain stable, i.e., $h_{\ell, t}=\Theta(1)$ for every $\ell\in\{0\}\cup[L]$ and $t\in\sN$.}
        
        We prove by induction on $\ell$.

        First, it is easy to see the $h_{0, t}=\Theta(1)$ for $t\in\sN$ because $\gP$ does not scale with $K$.

        Second, assume that $h_{\ell-1, t}=\Theta(1)$ for some $\ell\in[L]$. Then
        \begin{equation*}
            \|\mK_{\ell, t}\|_2 \leq \|\mK_{\ell, 0}\| + \sum_{\tau=0}^{t-1} \|\mK_{\ell, \tau+1} - \mK_{\ell, \tau}\|_2 = \Theta(1) \quad
            \Rightarrow \quad  \|\mK_{\ell, t} h_{\ell-1, t}\| = \Theta(1).
        \end{equation*}

        Also note that $\mW_{\ell, t}$ does not scale with $K$. Thus,
        \begin{equation*}
            w_{\ell, t} = (\mW_{\ell, t} + \mK_{\ell, t})  h_{\ell-1, t} = \Theta(1);~
            h_{\ell, t} = \phi (w_{\ell, t}) = \Theta(1).
        \end{equation*}

        Therefore, $h_{\ell, t}=\Theta(1)$ for every $\ell\in\{0\}\cup[L]$ and $t\in\sN$.
        
        \item \textit{$\gK_{\ell}$ updates maximally for every $\ell\in[L]$ and $t\in\sN^*$, i.e., $\Delta_t \gK_{\ell}h_{\ell-1, t}=\Theta(1)$ where $\Delta_t\gK_{\ell} = \gK_{\ell,t}-\gK_{\ell,0}$.}

        In the above we have shown that $h_{\ell-1, t} = \Theta(1)$. Moreover, 
        \begin{equation*}
            \|\Delta_t\mK_{\ell}\|_2 = \|\mK_{\ell,t}-\mK_{\ell,0}\|_2 \leq \sum_{\tau=0}^{t-1} \|\mK_{\ell, \tau+1} - \mK_{\ell, \tau}\|_2 =\Theta(1).
        \end{equation*}

        Therefore, in the general case, $\Delta_t \gK_{\ell}h_{\ell-1, t}=\Theta(1)$, which concludes the proof.
    \end{itemize}
    
    Finally, according to \cref{lemma:feature-learning-condition} and the above two statement, we prove feature learning in every layer.

    By \cref{def:mup-fno}, the parametrization in \cref{thm:main} is $\mu$P for FNO.
\end{proof}

\end{document}